\newtheorem{prop}{Proposition}
\newtheorem{definition}{Definition}[section]
\newtheorem{theorem}{Theorem}[section]
\newtheorem{corollary}{Corollary}[theorem]
\newtheorem{thm:eg}{Example}
\newcommand{\Ours}{\textbf{\textsc{Gconda}}\xspace}
\newcommand{\xhdr}[1]{\vspace{1.7mm}\noindent{{\bf #1}}}
\newcommand{\ie}{\emph{i.e.}\xspace} 
\newcommand{\eg}{\emph{e.g.}\xspace} 
\newcommand{\wrt}{\emph{w.r.t.}\xspace} 
\newcommand{\ssym}{§}
\def \E {\mathrm{E}}
\def \d {\mathbf{d}}
\def \f {\mathbf{f}}
\def \g {\mathbf{g}}
\def \h {\mathbf{h}}
\def \w {\mathbf{w}}
\def \x {\mathbf{x}}
\def \y {\mathbf{y}}
\def \P {\mathbf{P}}
\def \W {\mathbf{W}}
\def \D {\mathcal{D}}
\def \G {\mathcal{G}}
\def \N {\mathcal{N}}
\def \O {\mathcal{O}}
\def \S {\mathcal{S}}
\def \T {\mathcal{T}}
\newcounter{oldtocdepth}
\newcommand{\hidefromtoc}{%
  \setcounter{oldtocdepth}{\value{tocdepth}}%
  \addtocontents{toc}{\protect\setcounter{tocdepth}{-10}}%
}
\newcommand{\unhidefromtoc}{%
  \addtocontents{toc}{\protect\setcounter{tocdepth}{\value{oldtocdepth}}}%
}
\title{Explaining and Adapting Graph Conditional Shift}
\author{Qi Zhu$^*$ \And
Yizhu Jiao$^*$ \And
Natalia Ponomareva$^\dagger$ \And
Jiawei Han$^*$ \And
Bryan Perozzi$^\dagger$
\AND
\\
*: University of Illinois Urbana-Champaign $\dagger$: Google Research\\
\texttt{*\{qiz3,yizhuj2,hanj\}@illinois.edu}, \\
\texttt{$\dagger$\{nponomareva,bperozzi\}@google.com}
}
\begin{document}

\maketitle

\hidefromtoc

\begin{abstract}
Graph Neural Networks (GNNs) have shown remarkable performance on graph-structured data.
However, recent empirical studies suggest that GNNs are very susceptible to distribution shift. 
There is still significant ambiguity about \emph{why} graph-based models seem more vulnerable to these shifts.
In this work we provide a thorough theoretical analysis on it by quantifying the magnitude of \emph{conditional shift}\footnote{Conditional shift represents a change in the conditional distribution $\P(\y|\x)$ between the input features $\x$ and the corresponding output labels $\y$ when moving from the source domain to the target domain.} between the input features and the output label.
Our findings show that both graph heterophily and model architecture exacerbate conditional shifts, leading to performance degradation. 
To address this, we propose an approach that involves estimating and minimizing the conditional shift for unsupervised domain adaptation on graphs.
In our controlled synthetic experiments, our algorithm demonstrates robustness towards distribution shift, resulting in up to 10\% absolute ROC AUC improvement versus the second-best algorithm. Furthermore, comprehensive experiments on both node classification and graph classification show its robust performance under various distribution shifts.
\end{abstract}
\section{Introduction}
\label{sec:intro}

Graph Neural Networks (GNNs)~\cite{kipf2016semi,velivckovic2017graph,hamilton2017inductive,chami-survey} are powerful tools that have showed excellent performance on graph structured data. 
Interestingly, recent work has revealed that GNNs shows a susceptibility to performance degradation when confronted with data distribution shift, where the data used for training (\textit{source data}) and inference (\textit{target data}) come from different distributions~\cite{koh2021wilds,yehudai2021local}. 
Consequently, there has been a growing interest in investigating the behavior of GNNs under distribution shift, which demonstrate that both shifts in graph structure and node features can lead to a deterioration in GNN performance~\cite{zhu2021shift,ma2021subgroup,udagcn2020}.
However, these works are primarily empirical, and there are still many open questions about both the nature of this susceptibility, as well as how to address it effectively.


At the same time, extensive research has been conducted to examine the behavior of conventional machine learning models (excluding GNNs) in the presence of domain shift. Two prominent settings that have received significant attention are Unsupervised Domain Adaptation (UDA) and Domain Generalization (DG).
When unlabeled target data is available, common UDA approaches including learning Domain Invariant Representation Learning (DIRL) \cite{Ben-David2010, ganin2016domain} attempts to align latent representations of source and target data. Another approach, Domain Generalization (DG) \cite{arjovsky2019invariant}, addresses the challenge of training models that can generalize effectively to unseen target domains by leveraging multiple source domains during training.

In this paper, our focus is on node classification~\cite{kipf2016semi}, a fundamental task in GNNs, where the effectiveness of DIRL methods~\cite{ganin2016domain} has been found to be limited\cite{zhu2021shift}. 
To address this limitation, we begin with investigating the underlying distribution of latent representations generated by GNNs. 
Remarkably, we discover that GNNs can exacerbate the conditional shift ($\P_s(\y|\h)\neq\P_t(\y|\h)$), thereby challenging the validity of ``no conditional shift'' assumption ($\P_s(\y|\h) \approx \P_t(\y|\h)$) in DIRL methods.
In Section~\ref{sec:cond_shift_gnn}, we provide a thorough theoretical analysis by quantifying the magnitude of conditional shift.
Through our investigation into various graph characteristics, we observe that both graph heterophily~\cite{zhu2020beyond} and model architecture (specifically, graph convolutions~\cite{kipf2016semi}) exacerbate conditional shifts.
Our theoretical results then show that these shifts provably degrade the generalization capabilities of GNNs. Thus, mitigating the conditional shift is crucial for enhancing unsupervised domain adaptation on graphs.


Inspired by this understanding, we propose a \textbf{g}raph \textbf{cond}itional shift \textbf{a}daptation method, called \Ours, to perform graph UDA.
First, we estimate the conditional shift as Wasserstein distance $\widehat{\W}_1$ between source label distribution $\P_s(\y|\h)$ and estimated pseudo label distribution $\P_t(\hat{\y}|\h)$. 
Building upon our theoretical results, we incorporate the calculation and minimization of the estimated conditional shift between the source and unlabeled target batch into the training process.
Notably, we enhance our approach by incorporating the distribution discrepancy of the latent representation $(\P_s(\h), \P_t(\h))$ into the estimation of $\widehat{\W}_1$, which we refer to as \Ours++.
In Theorem~\ref{thm_3}, we discuss the generalization bound of GNNs with $\widehat{\W}_1$ and the Lipschitz constant of GNNs~\cite{chuang2022tree,you2023graph}.


Specifically, our theoretical and practical contributions are the following:

\xhdr{(i). Derivation of graph conditional shift and its implications.} Using a CSBM model, we provide the \emph{first provable result} (Theorem~\ref{thm_1}) quantifying how GNNs worsen conditional shift. Subsequent analysis (Corollary~\ref{cor:p-q}) identifies graph heterophily and graph convolutions as two contributing factors to the unsatisfactory performance of GNNs under distribution shifts.
This finding (Corollary~\ref{cor:cond-gap}) further offers insights into the practical implications and applications of GNNs.


\xhdr{(ii). Graph UDA by minimizing conditional shift.} Building upon our theoretical findings, we propose \Ours, a graph UDA method that leverages the minimization of conditional shift.
In practice, we observe a strong correlation between the estimated Wasserstein distance and the actual performance of the GNN model.
In contrast, other latent representation distances that do not exhibit the same level of correlation (\eg CMD~\cite{cmd2017} in Figure~\ref{fig:metric-comparison}), 

\xhdr{(iii). Robustness towards different distribution shifts.}
On synthetic graphs, \Ours demonstrates a substantial performance advantage over other DIRL baselines, with an absolute AUC\_ROC improvement of up to 10\%. In the node classification task, \Ours consistently outperforms competing methods across six real-world datasets, demonstrating superior performance even under various types of shifts. Additionally, when applied to graph classification, our approach leads to performance improvements as well.

\section{Related Work}
\label{sec:related}
\xhdr{Unsupervised Domain Adaptation.} The goal of UDA algorithms is to transfer knowledge from the source onto target data, obtaining good generalization on target distribution. 
In the theoretical foundational work of domain adaptation, \cite{Ben-David2010} presented an upper bound of target risk using the performance of the model on source data and introduced a domain discrepancy measure called $\mathcal{H}$-divergence. Since then, many domain adaptation algorithms that minimize differences between source and target domains have been proposed~\cite{ganin2016domain,cdan2018,gretton2012kernel,mmd2015,cmd2017}. For example, DANN~\cite{ganin2016domain} achieves domain invariant learning (DIRL) by introducing an adversarial objective to distinguish source and target samples in the latent space. Conditional DANN work - CDAN~\cite{cdan2018} - incorporates classifier predictions into the adversarial head, either via linear or multilinear conditioning, further improving UDA performance. Besides, some other work propose to match the distribution in the latent space through probability discrepancy measures like MMD~\cite{gretton2012kernel,mmd2015} and CMD~\cite{cmd2017}.
In a recent study~\cite{zhao2019learning}, it was demonstrated that existing methods for UDA suffer from poor generalization when there is variation in the conditional probability $\P(\y|\x)$ across domains. In response to this challenge, Wasserstein distance on joint \cite{courty2017joint} or label distribution \cite{le2021lamda} are proposed to guide the mapping between source and target samples using optimal transport.


\xhdr{Graph Domain Adaptation.}
Graph Representation Learning introduces new out-of-distribution (OOD) challenges based on the graph structure (including graph size~\cite{bevilacqua2021size,yehudai2021local}, molecular scaffolds\cite{gui2022good}). 
The first several studies\cite{zhang2019dane,udagcn2020,cai2021graph} adopted domain invariant learning across source and target graphs assuming
covariate shift.
On semi-supervised learning,
SRGNN\cite{zhu2021shift} introduced a combination of instance weighting and DIRL techniques to enhance OOD generalization in the presence of localized training data. 
Other pioneering work tried to capture environment-invariant node properties~\cite{eerm2022} and substructures~\cite{yang2022learning} guided by reinforcement learning based environment generators. 
In the meantime, theoretical analysis on the generalization bound of Graph Domain Adaptation (GDA) approaches is advancing. The Tree-mover's distance~\cite{chuang2022tree} provided a model-agnostic generalization bound for GNNs when facing distribution shift. Additionally, the first model-based GDA bound~\cite{you2023graph} proposed to optimize the Lipschitz constant of GNNs through spectral regularization.

Existing domain adaptation algorithms for GNNs primarily focused on enhancing model design to achieve improved empirical performance.
Unlike all these methods, our work introduces a novel perspective - \emph{conditional shift} to explain and mitigate the distribution shift for graph data.

\section{Understanding Distribution Shift in GNNs}
\label{sec:method}

\subsection{Background: Graph UDA}
\label{sec:bkg_gda}
\xhdr{Notations.} A graph is described by a tuple $\G(V, A, X)$, where the nodes $V$ are associated with their features $X \in \mathbb{R}^{|V| \times d}$ and the adjacency matrix $A\in \mathbb{R}^{|V| \times |V|}$ describes the connections between nodes.
We denote $Y$, ($Y \in \mathbb{Z}^{|V| \times |L|})$, as labels for all nodes in graph $\G$ and $x_i ,y_i$ represent a single node's features and label ($y_i \in L$). 
A Graph Neural Network $g$ stacks several neural network layers which transform nodes
and their neighborhood information into a latent representation $g: (X, A) \rightarrow H$. Each layer of a GNN can be described by:
\begin{equation}
     H^k = \sigma(\Tilde{A}H^{k-1}\theta^k),
    \label{eq:GNN}
\end{equation}
where $\Tilde{A}$ is a transformed adjacency matrix that is defined by a specific GNN method. 

The task of node classification takes nodes features $X$ and structure of the graph $A$ to predict labels $Y$ through a GNN encoder $g$ and classifier $f$.
Let the embedding $h_i$ be node $i$'s representation calculated by the final activations of a GNN's output $H$.  Then the task of binary node classification predicts the label using classifier $f$ as follows,
\begin{equation}
    f(\h) = \begin{cases}
    1,& \text{if } \w^T \h +b > 0\\
    -1,              & \text{otherwise}
\end{cases}
\label{eq:lin_cls}
\end{equation}
\textbf{Graph Unsupervised Domain Adaptation.} Given a source and target graph $\G_\S(V^s, A^s, X^s)$ and $\G_\T(V^t, A^t, X^t)$, we assume embeddings $\h^s$ and $\h^t$ are output by the same GNN.
The Unsupervised Domain Adaptation (UDA) algorithm utilizes labeled source $\{(\h^s,\y^s)\}$ data and unlabeled target data $\{\h^t\}$. Let $\varepsilon$ denote the expected risk of a binary classification problem defined above, then UDA aims to find a predictive classifier $f$ and GNN $g$ that achieves small target risk $\varepsilon_\T(f \circ g)$ on $\G_\T$.

To quantify the discrepancy between source and target distributions $\mu_\S$ and $\mu_\T$, we mainly use Wasserstein distance in this paper. In addition, we denote $\mu^f$ as the conditional distribution $\P(\y|\h)$ and $\mu^g$ as the representation distribution $\P(\h)$.
\begin{definition}[Wasserstein distance]
Wasserstein distance is defined between probability distributions $\mu_\S$ and $\mu_\T$ on metric space M, using distance function d, $d: M\times M\rightarrow \mathbb{R}$,
\begin{equation}
    \W_p(\mu_\S, \mu_\T) = \left( \inf_{\gamma \in \Gamma(\mu_\S, \mu_\T)} \E_{(x,y)\sim \gamma} d(x,y)^p  \right)^{1/p},
    \label{eq:w_dist}
\end{equation}
where $p$ is the moment of the distance and $\gamma$ is is a joint probability measure on $M \times M$.
\end{definition}


\xhdr{Domain-Invariant Representations under Covariate Shift.} Covariate shift refers to a change in the distribution of input features (covariates) between the source and target domains. Although labels are unavailable for the target data in UDA setting, DIRL methods~\cite{zhu2021shift,udagcn2020} for GNNs instead optimize the following objective, assuming the covariate shift $\P_s(\y|\h)=\P_t(\y|\h)$,
\begin{equation}
    \min_{f,g} \frac{1}{N} \sum_{i=1}^N \mathcal{L}(y_i^s, h_i^s) + \alpha \W_1(\mu_\S^g, \mu_\S^g),
    \label{eq:dirl-loss}
\end{equation}
where $h_i^s$ is the node representation from GNN's output $H$, $\mu_\S^g:=\P_s(\h)$ and $\mu_\T^g:=\P_t(\h)$ are the marginal distributions of the source and target graphs. The second term minimizes the discrepancy on $H$, which is known as learning a domain invariant representation. 
Besides Wasserstein distance~\cite{shen2018wasserstein}, there are several other notable measures used in DIRL such as CMD~\cite{cmd2017} and MMD~\cite{mmd2015}. 

Below we the give the formal definition of conditional shift. 

\begin{definition}[Conditional Shift]
Assume $\P_s(\y|\x)$ and $\P_t(\y|\x)$ have the same support on $\x$, then conditional shift is defined as $\Delta_{\y|\x}=\int d(\P_s(y|x),\P_t(y|x)) \P_t(x)dx$, $d: L \times L \rightarrow \mathbb{R}^+, y \in L.$
\label{def:conditional_shift}
\end{definition}

\subsection{Conditional Shift in Graph Neural Networks}
\label{sec:cond_shift_gnn}
Now, we present theoretical findings on the occurrence of \emph{conditional shift} in GNNs.
Assuming the conditional shift does take place (\eg covariate shift assumption does not hold), we explore the magnitude of this shift in the input space $\x$ and latent space $\h$ of GNNs. To quantify this shift, we use the terms $\Delta_{\y|\h}$ and $\Delta_{\y|\x}$ to represent the conditional shift in the latent space and input space, respectively. 
To analyze the conditional shift on different graph distributions,  we use the CSBM~\cite{deshpande2018contextual} graph model, an object of recent interest for understanding GNNs~\cite{ma2021homophily,baranwal2021graph}.

\begin{definition}[Contextual Stochastic Block Model (CSBM)] 
\label{def:csbm}
The CSBM graph is a tuple $(A, X, Y)$, where A is the node adjacency matrix, X are the nodes features and Y are the nodes labels $\{y_1, ..., y_n\}$.  These node labels $y_i$ are random variables drawn from a Bernoulli distribution ($\text{Ber}(0.5)$), and control the connections between nodes in the graph. $a_{ij} \sim \text{Ber}(p)$ if $y_i=y_j$ and $a_{ij} \sim \text{Ber}(q)$ otherwise. Features are drawn according to $x_i = y_i \mu  + \frac{Z_i}{\sqrt{d}}$, $y_i \in \{-1, 1\}$, $\mu \in \mathbb{R}^d$ is the feature mean and $Z_i \in \mathbb{R}^d$ is a Gaussian random variable.
\end{definition}

The three parameters of CSBM are $\mu$, $p$, and $q$.  They respectively control the closeness of the two classes, the generated graph's edge density (\eg average degree $D$) and its homophily ratio\footnote{Homophily ratio calculates the fraction of edges in a graph which connects the nodes that have the same label~\cite{zhu2020beyond}.}. 
By manipulating $\mu$ and $(p, q)$, it is possible to generate  distribution shifts of varying magnitude in both node features and graph structure.


To estimate the conditional shift on target CSBM graph $\G_\T$, we define $\Delta_{\y|\x}$ as,
\begin{equation}
\label{eq:cond_shift}
\Delta_{\y|\x} = \mathbb{E}_{\x \sim \P_t(\x)}\left(\mathbb{I}\left[\arg \max_y\P_s(\y|\x)\neq\arg \max_y\P_t(\y|\x) \right]\right),
\end{equation}

\noindent\textbf{Setting:}
The goal of this analysis is to investigate the conditions under which GNNs alleviate such shifts (making covariate shift more likely to hold), or exacerbate them.
Here, we use a 1-layer Graph Convolutional Network\cite{kipf2016semi} as our GNN encoder $g$\footnote{While we present here the results for one-layer GCNs and linear perceptron, our results can be extended to multi-layer graph convolutions with activations in the manner of ~\cite{baranwal2022effects}. We leave this for future work.}. 
On a CSBM graph $(\mu,p,q)$, the means of the two classes in the input space are $(-\mu, \mu)$, while in the latent space they are $(-\frac{p-q}{p+q}\mu, \frac{p-q}{p+q}\mu)$.
Without loss of generality, we assume the distribution shift on feature $\mu$ in $\mathcal{G}_\T$ is controlled by $\delta \in [0,1]$, which moves centroids of both classes in the same direction, that is, $(-(1+\delta)\mu, (1-\delta)\mu)$. 

In Figure~\ref{fig:ood-generalization-roc}, we illustrate how shifts in graph structure and node features can result in conditional shift. When the density or homophily ratio $(D', p'/q')$ changes, the class centroid shifts to different positions, as depicted by $\sqrt{D}\frac{p-q}{p+q}\mu \rightarrow \sqrt{D'}\frac{p'-q'}{p'+q'}\mu$ in Figure~\ref{fig:structure_shift}. Similarly, if the Gaussian mean moves towards a different position (\eg $\mu \rightarrow \mu^\prime$ in Figure~\ref{fig:feature_shift}), it also contributes to the conditional shift. We beging by deriving the conditional shift and expected error in the following theorem:

\begin{figure}[h]
\centering
  \begin{subfigure}{0.23\textwidth}
    \includegraphics[width=1\linewidth]{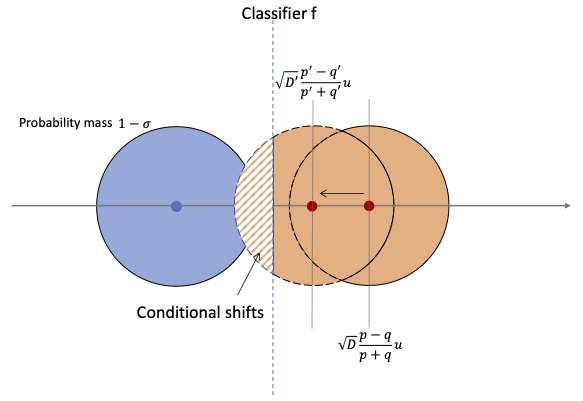}
    \caption{Structure shifts}
    \label{fig:structure_shift}
  \end{subfigure}
\begin{subfigure}{0.23\textwidth}
    \includegraphics[width=1\linewidth]{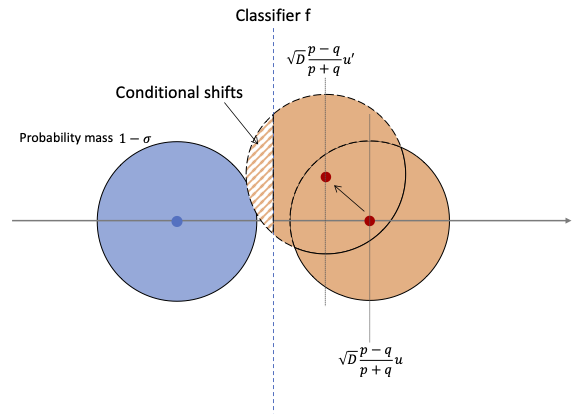}
  \caption{Feature shifts}
  \label{fig:feature_shift}
\end{subfigure}
\begin{subfigure}{0.23\textwidth}
\centering
\includegraphics[width=1\linewidth]{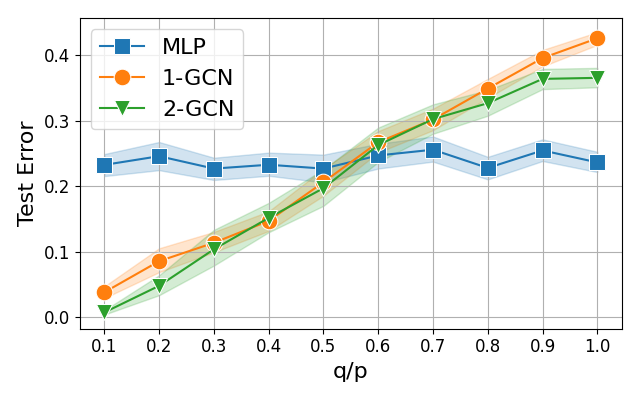}

\caption{Target $\varepsilon_\T$ varying $q/p$}
\label{fig:ood-generalization_a}
\end{subfigure}
\hfill
\begin{subfigure}{0.23\textwidth}
\centering
\includegraphics[width=1\linewidth]{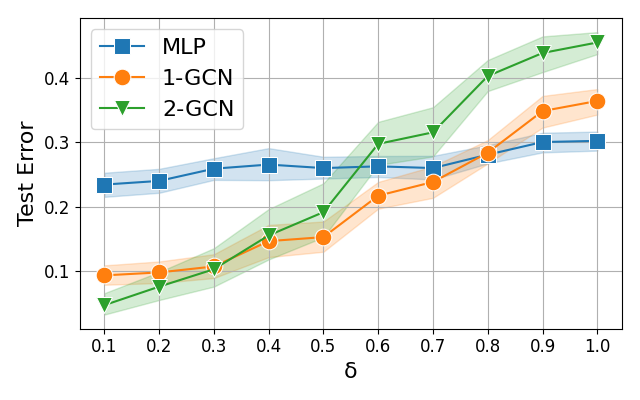}

\caption{Target $\varepsilon_\T$ varying $\delta$}
\label{fig:ood-generalization_b}
\end{subfigure}

\caption{Illustration of the conditional shift in a toy 2D latent space. The red points represent the Gaussian means in the latent space. The x-axis represents the direction of feature means ($\mu$), and the conditional shifts are indicated by the area with hatching lines, as defined in Eq.~\ref{eq:cond_shift}. The corresponding generalization results for both shifts are shown in (c) and (d).}
\label{fig:ood-generalization-roc}
\end{figure}

\begin{theorem}[{Conditional Shift in GNNs}]\label{thm_1}
Let the source graph $\mathcal{G}_\S$ = CSBM($\mu$, $p$, $q$), and a target graph $\mathcal{G}_\T$ = CSBM($\mu'$, $p'$, $q'$), where $D$ and $D'$ represent their average degrees respectively. Additionally, let $\Phi(\cdot)$ denote the cumulative distribution function (CDF) of a multivariate Gaussian distribution defined by distance. 
Then the introduced distribution shift between $\mathcal{G}_\S$ and $\mathcal{G}_\T$ can be quantified via the estimated conditional shift of  $\x$ and $\h$ as:
\begin{equation} \label{eq:gcn_shift}
\Delta_{\y|\x} = \frac{\Phi\left((1+\delta)\| \mu \|) - \Phi((1-\delta)\| \mu\| \right) }{2}, \Delta_{\y|\h} = \frac{\Phi(\|\mu_{h,-1}' \|) - \Phi(\| \mu_{h,1}'\| ) }{2}, 
\end{equation}

where $\mu_{h,1}' = \sqrt{D'}\frac{p'-q'}{p'+q'}\mu - \sqrt{D'} \delta\mu $ and $\mu_{h,-1}' = \sqrt{D'}\frac{q'-p'}{p'+q'}\mu - \sqrt{D'} \delta \mu$.
\end{theorem}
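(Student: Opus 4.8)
The plan is to reduce each conditional-shift term to a one-dimensional Gaussian-tail integral, using the fact that for a balanced mixture of two isotropic Gaussians with equal covariance the Bayes rule $\arg\max_y \P(\y\mid\cdot)$ is the linear threshold given by the perpendicular bisector of the segment joining the two class means. So the first step is to write down the two thresholds (for $\P_s$ and $\P_t$), observe that they are parallel hyperplanes with common normal $\mu$, identify the disagreement set of Eq.~\ref{eq:cond_shift} as the slab between them, and integrate $\P_t$ over that slab by projecting everything onto $\mu/\|\mu\|$.

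For $\Delta_{\y|\x}$: in $\G_\S$ the class means are $\pm\mu$, so $\arg\max_y\P_s(\y\mid\x)=\mathrm{sign}(\langle\mu,\x\rangle)$; in $\G_\T$ the feature shift moves both means by the common vector $-\delta\mu$ (to $(1-\delta)\mu$ and $-(1+\delta)\mu$), so $\arg\max_y\P_t(\y\mid\x)=\mathrm{sign}(\langle\mu,\x\rangle+\delta\|\mu\|^2)$ and (for $\delta\ge0$) the disagreement region is $\{\x:-\delta\|\mu\|^2<\langle\mu,\x\rangle<0\}$. Projecting $\P_t(\x)$ onto $\mu/\|\mu\|$ gives the balanced mixture $\tfrac12\mathcal N((1-\delta)\|\mu\|,1/d)+\tfrac12\mathcal N(-(1+\delta)\|\mu\|,1/d)$; evaluating the slab probability for each component and collapsing with $\Phi(-t)=1-\Phi(t)$ yields $\tfrac12\big(\Phi((1+\delta)\|\mu\|)-\Phi((1-\delta)\|\mu\|)\big)$, with the $\sqrt d$ factor absorbed into $\Phi$ as in the theorem's convention.

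For $\Delta_{\y|\h}$ I would first establish the extra ingredient that a one-layer GCN on a CSBM maps a label-$y$ node to a vector concentrating around an isotropic Gaussian whose mean is the neighbor-mean average: using degree concentration around $D$ and concentration of the same-label neighbor fraction around $p/(p+q)$ (in the spirit of \cite{baranwal2021graph,baranwal2022effects}), this mean is $\pm\sqrt D\,\tfrac{p-q}{p+q}\mu$ on $\G_\S$ and, combining $(p',q',D')$ with the feature shift $\delta$, equals exactly $\mu_{h,1}'$ and $\mu_{h,-1}'$ on $\G_\T$ (since $\sqrt{D'}\big(\tfrac{p'}{p'+q'}(1\mp\delta)-\tfrac{q'}{p'+q'}(1\pm\delta)\big)\mu$ reproduces the stated vectors). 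The remainder is identical to the input case: both Bayes boundaries are hyperplanes normal to $\mu$, their disagreement region is a slab, and projecting $\P_t(\h)$ onto $\mu/\|\mu\|$ and using $\Phi(-t)=1-\Phi(t)$ together with $\|\mu_{h,\pm1}'\|=|\langle\mu,\mu_{h,\pm1}'\rangle|/\|\mu\|$ gives $\tfrac12\big(\Phi(\|\mu_{h,-1}'\|)-\Phi(\|\mu_{h,1}'\|)\big)$.

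The hard part is this GCN-to-Gaussian-mixture reduction: making precise (as $n\to\infty$, or with high-probability error bounds) that after the $1/d_i$ aggregation the representation is a mixture of isotropic Gaussians with the claimed means and a scaled identity covariance, which requires controlling fluctuations of the random degrees and of the empirical same/different-label neighbor counts and checking that the aggregated noise stays approximately isotropic. A secondary point is sign bookkeeping: the stated formulas correspond to the generic regime (e.g.\ $p>q$, $p'>q'$, and $\delta$ small enough that $\|\mu_{h,1}'\|=\sqrt{D'}(\tfrac{p'-q'}{p'+q'}-\delta)\|\mu\|>0$), in which the source and target Bayes rules share an orientation; heterophilous or boundary configurations merely permute the roles of $\|\mu_{h,\pm1}'\|$ and go through by the same slab integral, so I would state these assumptions up front.
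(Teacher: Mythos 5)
Your proposal is correct and follows essentially the same route as the paper's proof: both reduce each conditional-shift term to the Gaussian mass of the slab between the source and target Bayes hyperplanes (both normal to $\mu$), and both characterize the post-GCN representation as a balanced mixture of rescaled isotropic Gaussians with class means $\pm\sqrt{D}\frac{p-q}{p+q}\mu$ on the source and $\mu_{h,\pm 1}'$ on the target, obtained by averaging over a $p/(p+q)$ fraction of same-class neighbors and $q/(p+q)$ fraction of cross-class neighbors. The concentration gap you flag at the end (fluctuations of degrees and neighbor-label counts, and approximate isotropy of the aggregated noise) is also elided in the paper, which treats the aggregation as a weighted mean of exactly $D$ Gaussians and defers the rigorous version to the Baranwal et al. line of work cited in its footnote.
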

\begin{proof}
See Appendix \ssym A.1. In the proof, we scale the GCN output of the target graph $\h'$ into a standard Gaussian distribution. Then we can compute $\Delta$ by comparing the relative position of the optimal classification hyperplane and mean of the Gaussian. \end{proof}

Supposing two graphs $G_1(\mu,p_1,q_1),G_2(\mu,p_2,q_2)$ have the same feature distribution and edge density, $G_1$ is more heterophilous if it has more edges connecting nodes of different classes, that is $p_1<p_2, q_1>q_2$.
Upon examining the magnitude of the conditional shift $\Delta_{\y|\h}$ in the two graphs, we find that $\mu_{h,-1}'(G_1) > \mu_{h,-1}'(G_2)$ and $\mu_{h,1}'(G_1) < \mu_{h,1}'(G_2)$. This inequality arises due to the fact that $\frac{p_1-q_1}{p_1+q_1} < \frac{p_2-q_2}{p_2+q_2}$. 
In other words, Eq.\ \eqref{eq:gcn_shift} shows that \textbf{heterophilous graphs demonstrates a greater degree of conditional shift}!

\begin{corollary} [GNNs exacerbate Conditional Shift]
\label{cor:p-q}
Assuming only homophily ratio changes $p/q \neq p'/q'$, the conditional shift is always exacerbated by the 1-layer GCN since $\Delta_{\y|\x}=0$.
When there is only a feature shift $\delta \mu$, the shift will be amplified by the GCN as $\sqrt{D}\delta\mu$, potentially leading to larger conditional shifts.

\end{corollary}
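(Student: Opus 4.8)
The plan is to read both assertions off Theorem~\ref{thm_1} by plugging in the two special shift configurations the corollary names, and then to compare $\Delta_{\y|\h}$ with $\Delta_{\y|\x}$ through one elementary property of the Gaussian. For the first assertion, ``only the homophily ratio changes'' means $\mu'=\mu$ and $\delta=0$ (no feature perturbation at all), so $\G_\S$ and $\G_\T$ have identical feature laws and identical $\P(\y|\x)$; substituting $\delta=0$ into the first identity of~\eqref{eq:gcn_shift} gives $\Delta_{\y|\x}=\tfrac12\bigl(\Phi(\|\mu\|)-\Phi(\|\mu\|)\bigr)=0$, so the ``no conditional shift'' assumption holds exactly before message passing. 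After one GCN layer the class centroids are relocated to $\pm\sqrt{D'}\tfrac{p'-q'}{p'+q'}\mu$, and the Bayes rule $\P_t(\y|\h)$ is a hyperplane through the origin whose orientation is fixed by the sign of $p'-q'$; since $p'/q'\ne p/q$ this rule generally disagrees with $\P_s(\y|\h)$ (which is built from $p,q$), so $\P(\y|\h)$ fails to be domain invariant even though $\P(\y|\x)$ is — and the $\arg\max$ surrogate $\Delta_{\y|\h}$ of~\eqref{eq:gcn_shift} even jumps to $1$ once the homophily crosses the line $p=q$ and the two decision rules become exact opposites. Thus the GCN cannot preserve, and typically destroys, the zero-conditional-shift situation it inherits.

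For the second assertion I set $p'=p$, $q'=q$ (hence $D'=D$) in Theorem~\ref{thm_1}. Then $\mu_{h,1}'=\sqrt{D}\tfrac{p-q}{p+q}\mu-\sqrt{D}\delta\mu$ and $\mu_{h,-1}'=-\sqrt{D}\tfrac{p-q}{p+q}\mu-\sqrt{D}\delta\mu$, i.e.\ both latent centroids are translated by exactly $-\sqrt{D}\delta\mu$ relative to the source centroids $\pm\sqrt{D}\tfrac{p-q}{p+q}\mu$: the input-space feature shift $-\delta\mu$ reappears in the latent space, amplified by the factor $\sqrt{D}$. (This $\sqrt{D}$ is the very renormalization that turns $\tfrac{p-q}{p+q}\mu$ into $\sqrt{D}\tfrac{p-q}{p+q}\mu$ — averaging over $\approx D$ neighbours shrinks the aggregated noise by $\sqrt{D}$, so rescaling it back to unit scale multiplies every mean, the shift included, by $\sqrt{D}$.) To convert this into a statement about the \emph{size} of the conditional shift, I would write both quantities through the standard-Gaussian interval mass $g(a,w):=\Phi(a+w)-\Phi(a-w)$: the first identity of~\eqref{eq:gcn_shift} is exactly $\Delta_{\y|\x}=\tfrac12\,g(\|\mu\|,\delta\|\mu\|)$, and rearranging the second (equivalently, redoing the slab computation from the proof of Theorem~\ref{thm_1}) gives $\Delta_{\y|\h}=\tfrac12\,g\bigl(\sqrt{D}\,|r|\,\|\mu\|,\ \sqrt{D}\,\delta\|\mu\|\bigr)$ with $r:=\tfrac{p-q}{p+q}$. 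Because $g(a,w)$ is strictly increasing in $w$ and strictly decreasing in $|a|$, the map $\x\to\h$ stretches the interval by $\sqrt{D}$ (raising the mass whenever $D>1$) and moves its centre from $\|\mu\|$ to $\sqrt{D}\,|r|\,\|\mu\|$ (raising it further whenever $\sqrt{D}\,|r|\le 1$, e.g.\ for heterophilous or weakly homophilous graphs); in those regimes $\Delta_{\y|\h}>\Delta_{\y|\x}$, which is the ``potentially larger conditional shift'' statement.

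The one genuinely delicate point is this last comparison: $\Phi$ is nonlinear, so a stretched interval need not contain more mass if its centre has simultaneously moved away from $0$, and a strongly homophilous graph (large $\sqrt{D}\,|r|$) can have $\Delta_{\y|\h}<\Delta_{\y|\x}$ despite the stretching. The real content of the second half is therefore isolating the regime where the two monotonicities of $g$ — in $w$ and in $|a|$ — reinforce one another ($D>1$ together with $\sqrt{D}\,|r|$ not too large, or $\delta$ small), which is exactly why the corollary is phrased with ``potentially.'' A minor bookkeeping subtlety, needed only for the first half, is tracking which side of the transported source hyperplane is labelled $+1$ once the homophily sign flips, so that the $\Delta_{\y|\h}=1$ corner case comes out right.
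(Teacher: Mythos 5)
Correct, and it follows the paper's own route: both halves are read off Theorem~\ref{thm_1} by specializing to $\delta=0$ for the first claim and to $p'=p,\,q'=q$ (hence $D'=D$) for the second, with the $\sqrt{D}$ amplification factor coming straight from the rescaled latent means $\mu_{h,\pm1}'$. Where you go a little further than the paper is the $g(a,w)=\Phi(a+w)-\Phi(a-w)$ parametrization and its two monotonicities (increasing in $w$, decreasing in $|a|$), which lets you isolate the regime where $\Delta_{\y|\h}>\Delta_{\y|\x}$ actually holds; the paper's proof of this corollary explicitly declines that comparison (``obtaining a closed-form solution \dots is complicated'') and only records the $\sqrt{D'}$ factor, so your extra step is a genuine sharpening of the ``potentially'' hedge rather than a different argument.
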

Here, $\Delta_{\y|\x}$ is the conditional shift of a non-graph model (e.g.\ a multilayer perceptron) and $\Delta_{\y|\h}$ is the conditional shift of a GCN.
Interestingly, Eq.\ \eqref{eq:gcn_shift} shows GCNs introduce a factor of $\sqrt{D'}$, where $D' > 1$ for any connected graphs. 
In other words, \textbf{GNNs amplify feature shift} (by $\sqrt{D'})$.

\begin{corollary} [Relation between conditional shift and generalization]
\label{cor:cond-gap}
Conditional shift upper bounds the performance gap between source and target, \ie $\Delta > |\varepsilon_\T-\varepsilon_\S|$.
The expected target error $\varepsilon_{\T}$ for linear classifiers $f$ and GNNs $f\circ g$ in section~\ref{sec:bkg_gda} are,
\begin{equation}
    \varepsilon_{\T}(f)= 1 -  \frac{\Phi((1+\delta)\| \mu \| ) + \Phi((1-\delta)\| \mu \|)}{2}, \varepsilon_{\T}(f\circ g)=1-\frac{\Phi(\|\mu_{h,-1}' \|) + \Phi(\| \mu_{h,1}'\| ) }{2}.
\end{equation}
\end{corollary}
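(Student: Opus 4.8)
The plan is to reduce both claims to statements about the standard Gaussian CDF $\Phi$ evaluated at signed distances between the (possibly shifted) class centroids and the source decision hyperplane, reusing exactly the normalization introduced in the proof of Theorem~\ref{thm_1} (scaling the target features / GCN outputs to a standard Gaussian).

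\emph{Error formulas (second part).} Both $f$ and $f\circ g$ are trained on the source, hence each is the source-Bayes classifier, which by the symmetry of the CSBM (equal priors $\mathrm{Ber}(0.5)$, antipodal class means, common covariance) is the hyperplane through the origin orthogonal to $\mu$ (resp.\ to the latent direction $\mu$). I would decompose the target risk by class, $\varepsilon_\T=\tfrac12 P_t(\text{err}\mid y{=}1)+\tfrac12 P_t(\text{err}\mid y{=}{-}1)$, and observe that after the rescaling the conditional error for class $y$ equals $1-\Phi(d_y)$, where $d_y$ is the signed distance from the class-$y$ target centroid to the source hyperplane. For the linear classifier, the shift $(-(1+\delta)\mu,\,(1-\delta)\mu)$ gives $d_1=(1-\delta)\|\mu\|$ and $d_{-1}=(1+\delta)\|\mu\|$; for the GCN, Theorem~\ref{thm_1} already identifies $d_1=\|\mu_{h,1}'\|$ and $d_{-1}=\|\mu_{h,-1}'\|$. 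Summing yields precisely the two displayed expressions; this step is bookkeeping on top of Theorem~\ref{thm_1}.

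\emph{The inequality (first part).} Writing the source risk in the same form, $\varepsilon_\S(f)=1-\Phi(\|\mu\|)$ and $\varepsilon_\S(f\circ g)=1-\Phi(r_s)$ with $r_s=\sqrt{D}\tfrac{p-q}{p+q}\|\mu\|$ (the ``no-shift'' instance), and using $\Delta_{\y|\x}=\tfrac12\big(\Phi((1+\delta)\|\mu\|)-\Phi((1-\delta)\|\mu\|)\big)$ and $\Delta_{\y|\h}=\tfrac12\big(\Phi(\|\mu_{h,-1}'\|)-\Phi(\|\mu_{h,1}'\|)\big)$ from Theorem~\ref{thm_1}, a short substitution shows that in each case $\Delta-(\varepsilon_\T-\varepsilon_\S)=\Phi(b)-\Phi(r_s)$ and $\Delta-(\varepsilon_\S-\varepsilon_\T)=\Phi(r_s)-\Phi(a)$, where $a\le b$ are the two per-class target distances (with $a=(1-\delta)\|\mu\|$, $b=(1+\delta)\|\mu\|$ and $r_s=\|\mu\|$ for the linear case). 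Hence $\Delta>|\varepsilon_\T-\varepsilon_\S|$ is equivalent to the ``straddling'' relations $a<r_s<b$ together with strict monotonicity of $\Phi$. Under a feature shift the source signal strength $r_s$ is exactly the midpoint of $[a,b]$, so the straddle is automatic and the claim follows (strictness from $\delta>0$); a more conceptual route is to note that $\Delta$ is, by Eq.~\eqref{eq:cond_shift}, the target measure of the region where the source-Bayes and target-Bayes predictions disagree, which gives $\varepsilon_\T(f)\le\varepsilon_\T(f_t^\star)+\Delta$ by a union bound, and combining with Bayes optimality $\varepsilon_\T(f_t^\star)\le\varepsilon_\T(f)$ and $\varepsilon_\T(f_t^\star)=\varepsilon_\S$ (valid when the target class-conditionals are translates of the source ones) yields $0\le\varepsilon_\T-\varepsilon_\S\le\Delta$, strict because $f_t^\star$ is correct with probability $>\tfrac12$ on the disagreement set.

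\emph{Main obstacle.} The delicate point is the absolute value and the regime of validity. The direct computation bounds $\pm(\varepsilon_\T-\varepsilon_\S)$ by $\Delta$ only under $a<r_s<b$; for the linear classifier (or the GCN under a pure feature shift) this holds by construction, and $\varepsilon_\T\ge\varepsilon_\S$ follows independently from concavity of $\Phi$ on $[0,\infty)$ via Jensen. When a structural shift also changes the centroid separation ($\sqrt{D}\tfrac{p-q}{p+q}\neq\sqrt{D'}\tfrac{p'-q'}{p'+q'}$) the straddle can fail, and one must invoke the $\sqrt{D'}$-amplification of Corollary~\ref{cor:p-q} (the GCN absorbs the structural shift into an effective feature shift) to restore it; pinning down exactly which joint $(\delta,p',q',D')$ regimes the inequality covers is the part that requires care rather than routine algebra.
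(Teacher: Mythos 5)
Your proposal is essentially correct and mirrors the paper's own argument. The error formulas are obtained by the same per-class decomposition under the standard-Gaussian rescaling from Theorem~\ref{thm_1}: each class contributes $\tfrac12\bigl(1-\Phi(d_y)\bigr)$, where $d_y$ is the distance from the target class centroid to the source hyperplane, and this reproduces both displayed expressions. For the inequality the paper performs exactly the cancellation you describe, computing $\Delta_{\y|\x}-|\varepsilon_\S-\varepsilon_\T|=\Phi\bigl((1+\delta)\|\mu\|\bigr)-\Phi(\|\mu\|)>0$, and then the analogous quantity $\Phi\bigl(\|\mu'+\sqrt{D'}\delta\mu\|\bigr)-\Phi(\|\mu'\|)>0$ with $\mu'=\sqrt{D'}\tfrac{p'-q'}{p'+q'}\mu$.

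Where you go beyond the paper is in surfacing the regime-of-validity issue. Your reduction to the two identities $\Delta-(\varepsilon_\T-\varepsilon_\S)=\Phi(b)-\Phi(r_s)$ and $\Delta-(\varepsilon_\S-\varepsilon_\T)=\Phi(r_s)-\Phi(a)$ makes the straddle condition $a<r_s<b$ an explicit necessary and sufficient condition, and you are right that the GCN branch of the paper's proof tacitly sets $(D,p,q)=(D',p',q')$ (it writes $\varepsilon_\S(f\circ g)=1-\Phi(\|\mu'\|)$ with $\mu'$ built from the \emph{target} structure parameters, which only equals the source error under a pure feature shift). The appeal to concavity of $\Phi$ on $[0,\infty)$ to certify $\varepsilon_\T\ge\varepsilon_\S$ is a small clarity improvement over the paper, which simply drops the absolute value. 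Your conceptual alternative — bounding $\varepsilon_\T(f)-\varepsilon_\T(f_t^\star)$ by the target mass of the disagreement region, then using $\varepsilon_\T(f_t^\star)=\varepsilon_\S$ under translation — is a legitimately different and cleaner route, and it makes the strictness transparent (the Bayes classifier is correct with probability strictly above $\tfrac12$ on the disagreement set). The trade-off is that it only applies when the target conditionals are genuine translates of the source ones, which is also the regime where the algebraic argument is clean; for joint feature-and-structure shifts neither route, nor the paper's, establishes the inequality without further assumptions, and your final paragraph correctly flags that this is the delicate part.
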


Together with the Corollary~\ref{cor:p-q}, we aim to validate the correlation between conditional shift and target error $\varepsilon_\T$.
Therefore, we trained an MLP, a 1-layer GCN, and a 2-layer GCN on a source CSBM graph where GNNs achieves smaller ($\varepsilon_\S \approx 0$) than MLP ($\varepsilon_\S \approx 0.2$). During testing, we kept the graph density unchanged (\textit{i.e.,} $D=D'$), while increasing the heterophily ratio $q'/p'$ or the deviation in feature mean $\delta$.
As shown in Figure~\ref{fig:ood-generalization_a} and Figure~\ref{fig:ood-generalization_b}, we observed (1) GCNs cannot separate the training data more accurately than MLP when shift is large (\ie, a larger  $\varepsilon_\T$); (2) the performance gap $\varepsilon_\T - \varepsilon_\S$ between the source and target is more pronounced in GCNs, confirming that \textbf{conditional shift of GNNs leads to a larger performance drop}.
Having demonstrated that graph inductive bias often exacerbates conditional shift, our focus now turns to exploring potential mitigations of such shift during GNN training.

\section{Graph UDA by Minimizing Conditional Shift}
\label{sec:graph-ot-uda}
In the previous section, we discussed the exacerbated conditional shift for GNNs and how they relate to the performance degradation.
Now, we present our approach to mitigate this conditional shift, quantified using the Wasserstein distance, in order to achieve effective graph UDA.

%

We first introduce the formal definition of optimal transport used in Eq.\ \eqref{eq:w_dist}.
Wasserstein distance can be computed as the optimal transport (OT) cost~\cite{monge1781memoire} between two distributions. Let $\d(u_i,v_j)$ be the distance between two sets of samples $\{u_i\}_{i=1}^m$ and $\{v_j\}_{j=1}^m$ drawn from $\mu_\S$ and $\mu_\T$ respectively. OT solves the following problem: 
\begin{equation}
    \gamma^* = \arg \min_{\gamma \in \Gamma(\mu_s, \mu_t)} \sum_{i,j} \d(u_i,v_j) \gamma(i,j)
\end{equation}
Specifically, $\Gamma$ is the set of transportation plans that satisfy $\Gamma(\mu_s, \mu_t)=\{\gamma \in \mathbb{R}_{+}^{m \times m} | \gamma\mathbbm{1}_m=\gamma^\intercal\mathbbm{1}_m=\mathbbm{1}_m \}$.

To estimate the empirical conditional shift, we calculate the Wasserstein distance between source label $\y^s$ and estimated target label $\hat{\y}^t=f(g(\x^t))$ as $\widehat{\W}_1(\mu_\S^f, \mu_\T^f)$. Hereby we introduce the learning problem of unsupervised graph domain adaptation by minimizing conditional shift. Given source labeled data $\{(x_i^s, y_i^s)\}_{i=1}^N$ in $\G^s$ and unlabeled target data $\{x_j^t\}_{j=1}^N$ in $\G^t$, we optimize the following loss function,
\begin{align}
\label{eq:gjdot_loss}
\mathcal{L}_{\Ours} = \frac{1}{N} \sum_i  \mathcal{L}_\text{CE}(y^s_i, \hat{y}^s_i) + \lambda \widehat{\W}_1(\mu_\S^f, \mu_\T^f), \\
\widehat{\W}_1(\mu_\S^f, \mu_\T^f) = \sum_{ij}\gamma^*_{ij} \cdot \d\left(y_i^s, \hat{y}_j^t\right), \d(\cdot) = \mathcal{L}_\text{CE}(\cdot)\
\end{align}
where $\hat{y}_i$ and $\hat{y}_j$ are predictions on the source and target data produced by the classifier $f$ and GNN encoder $g$. $\mathcal{L}_\text{CE}$ is the cross-entropy loss.
The loss consists of (1) classification loss on $\G^s$; (2) estimated conditional shift $\widehat{W}_1$ between source and target samples in the batch; $\Gamma^* \in \mathbb{R}^{N\times N}$ is the optimal transportation plan between node $i$ in source graph $\G^s$ and j in target $\G^t$, $\sum_{ij}\gamma^*_{ij}=1$. 

Besides matching the conditional distribution $\P(\y|\h)$, we propose to also mitigate the discrepancy marginal probability $\P(\h)$ following ideas from non-GNN UDA works ~\cite{courty2017joint} and call this variant \Ours++.
We define the distance between source data $(h_i^s,y_i^s)$ and target data $(h_j^t,\hat{y}_j^t)$ as,
\begin{equation}
    \d\left((h_i^s,y_i^s), (h_j^t,\hat{y}_j^t)\right) = \alpha \|h^s_i-h^t_j\|^2 + \beta \mathcal{L}_\text{CE}(y^s_i, \hat{y}^t_j)
    \label{eq:pairwise-distance}
\end{equation}
where $h_i=g(x_i)$ is the output of a GNN. \Ours++ optimizes \textit{both} the conditional and marginal distribution, that is, $\widehat{\W}_1(\mu_\S^{f\circ g}, \mu_\T^{f\circ g})$. If we set $\beta=0$, our approach is equivalent to a DIRL method using optimal transport. In our experiments (\ie, Table~\ref{tab:semi-supervised}), we confirm this by showing that \Ours with $\beta=0$ yields similar results to DIRL baselines.



\xhdr{Generalization Bound of \Ours.}
Next we show the relationship between the estimated conditional shift and the generalization error  under distribution shifts. We achieve this by extending theoretical results from \cite{courty2017joint}.

\begin{theorem}
\label{thm_3}
Suppose $\mathcal{F}$ is the hypothesis space of GNNs, $\forall f \in \mathcal{F}$,
\begin{equation}
\varepsilon_\T(f) \leq \varepsilon_\S(f) + \W_1(\mu_\S^{f}, \mu_\T^{f}) + \lambda^* + K_\mathcal{L} K_g \phi(c),
\end{equation}
where $\lambda^*$ is the joint optimal error, $K_\mathcal{L}$ is the Lipschitz constant loss function of loss function $\mathcal{L}$, $K_g$ is the Lipschitz constant of GNN $g$ and $\phi(c)$ is the probabilistic lipschitzness~\cite{ben2014domain}.
\end{theorem}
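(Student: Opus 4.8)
# Proof Proposal for Theorem~\ref{thm_3}

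\textbf{Overall strategy.} The plan is to adapt the classical domain-adaptation decomposition of target risk (in the style of Ben-David et al.\ and the joint-distribution optimal transport bound of Courty et al.) to the setting where the transported quantity is the \emph{conditional} label distribution $\mu^f = \P(\y|\h)$ rather than the joint feature-label pair. First I would introduce an intermediate quantity: the risk of $f$ measured against a hypothetical ``relabeling'' of the target samples induced by the optimal transport plan $\gamma^*$ between $\mu_\S^f$ and $\mu_\T^f$. Write $\varepsilon_\T(f)$, insert and subtract this intermediate term, and bound the two resulting pieces separately. One piece will collapse into $\varepsilon_\S(f)$ plus a transport cost, the other into the joint-optimal-error term $\lambda^*$.

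\textbf{Key steps in order.} (1) Recall that the loss $\mathcal{L}$ is $K_\mathcal{L}$-Lipschitz in its prediction argument, so for any two hypotheses / predictions the difference in losses is controlled by the distance between predictions. (2) Use the coupling $\gamma^*$ achieving $\W_1(\mu_\S^f,\mu_\T^f)$ to write $\varepsilon_\T(f) = \E_{(s,t)\sim\gamma^*}[\mathcal{L}(y_t, f(h_t))]$, then compare $\mathcal{L}(y_t,f(h_t))$ with $\mathcal{L}(y_s, f(h_s))$: the label part contributes exactly the Wasserstein term $\W_1(\mu_\S^f,\mu_\T^f)$ (since $\d = \mathcal{L}_\text{CE}$ is the ground cost on labels), giving $\varepsilon_\T(f) \le \varepsilon_\S(f) + \W_1(\mu_\S^f,\mu_\T^f) + (\text{residual})$. (3) The residual measures the mismatch between $f(h_s)$ and $f(h_t)$ for coupled pairs — i.e.\ how much the GNN-classifier composite can differ on points that the transport plan identifies. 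This is where $K_\mathcal{L}K_g$ enters: by Lipschitzness of $g$ (constant $K_g$) and of $\mathcal{L}$ (constant $K_\mathcal{L}$), $|\mathcal{L}(y, f(h_s)) - \mathcal{L}(y, f(h_t))| \le K_\mathcal{L} K_g \, \|x_s - x_t\|$ whenever $f$ itself does not introduce an extra jump; the probabilistic-Lipschitzness function $\phi(c)$ absorbs the low-probability event where $f$ is \emph{not} locally smooth (crossing the decision boundary), following Ben-David et al.'s probabilistic Lipschitzness. (4) Finally, fold the irreducible cross-domain error of the best joint hypothesis into $\lambda^*$ exactly as in the standard bound, and collect terms to obtain $\varepsilon_\T(f) \le \varepsilon_\S(f) + \W_1(\mu_\S^f,\mu_\T^f) + \lambda^* + K_\mathcal{L}K_g\phi(c)$.

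\textbf{Main obstacle.} The delicate point is step (3): handling the composition $f \circ g$ cleanly. A linear classifier $f$ is \emph{not} globally Lipschitz as a hard predictor (it jumps at the hyperplane), so one cannot simply multiply Lipschitz constants. The trick — and the reason $\phi(c)$ appears rather than a clean Lipschitz constant for $f$ — is to invoke probabilistic Lipschitzness: with high probability a transported pair $(x_s, x_t)$ with $\|x_s - x_t\|$ small lands on the same side of the boundary, and the bad event has measure at most $\phi(c)$ for the relevant scale $c$ tied to the transport radius. I would need to state precisely how $c$ is chosen (essentially the effective transport distance scale, or a quantile thereof) and argue the union bound over the coupling is benign. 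I should also be careful that $\lambda^*$ is defined consistently — as $\min_{f'\in\mathcal{F}} \varepsilon_\S(f') + \varepsilon_\T(f')$ or the analogous joint-OT optimal — and that no hidden dependence on $\mathcal{F}$ beyond its Lipschitz/VC-type control is swept under the rug. The full details follow the template of \cite{courty2017joint}, adapted to read distances on the label simplex via $\mathcal{L}_\text{CE}$ and to route the feature-space smoothness through the GNN Lipschitz constant $K_g$ of \cite{you2023graph}.
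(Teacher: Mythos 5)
Your high-level instinct is right: the proof does follow the template of Courty et al.\ \cite{courty2017joint}, routing the bound through the optimal transport plan $\Gamma^*$ between the source joint distribution $\mu_\S^f = \P_s(\x,\y)$ and the pseudo-labeled target distribution $\mu_\T^f = \P_t^f(\x, f(\x))$, and invoking probabilistic transfer Lipschitzness to produce the $K_\mathcal{L} K_g \phi(c)$ slack. However, your decomposition as stated would not close, and the gap is a genuine one.

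The missing idea is the \emph{optimal labeling function} $f^* \in \mathcal{F}$. The paper's proof opens by applying the triangle inequality for $\mathcal{L}$ against $f^*$, yielding
\begin{equation}
\varepsilon_\T(f) \;\le\; \underbrace{\mathbb{E}_{(\x,f(\x))\sim \P_t^f}\,\mathcal{L}\bigl(f(\x), f^*(\x)\bigr)}_{=:\,\varepsilon_\T^f(f^*)} \;+\; \varepsilon_\T(f^*)
\;\le\; \bigl|\varepsilon_\T^f(f^*) - \varepsilon_\S(f^*)\bigr| \;+\; \underbrace{\varepsilon_\S(f^*)+\varepsilon_\T(f^*)}_{\lambda^*},
\end{equation}
so $\lambda^*$ is not something you ``fold in at the end'' — it is produced structurally by adding and subtracting $\varepsilon_\S(f^*)$. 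Without $f^*$ in the picture you have no mechanism to manufacture the joint-optimal-error term, and your step (4) is a placeholder rather than a derivation.

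Relatedly, you propose to write $\varepsilon_\T(f) = \E_{(s,t)\sim\gamma^*}[\mathcal{L}(y_t, f(h_t))]$. But $\gamma^*$ is a coupling of $\mu_\S^f$ and $\mu_\T^f$, whose target marginal carries the \emph{pseudo-label} $f(\x_t)$, not the unobserved true label $y_t$; the true target label distribution is not a marginal of this coupling, so the identity does not hold. The paper instead bounds the residual quantity $|\varepsilon_\T^f(f^*) - \varepsilon_\S(f^*)|$ as an integral of $\mathcal{L}(f(\x_t),f^*(\x_t)) - \mathcal{L}(\y_s,f^*(\x_s))$ against $\Gamma^*$, splits by the triangle inequality, and uses $K_\mathcal{L}$-Lipschitzness of the loss to reduce one piece to $K_\mathcal{L}\,|f^*(\x_s)-f^*(\x_t)|$.

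Finally, you assign the probabilistic Lipschitzness to the learned predictor $f$ (``crossing the decision boundary''). In the paper it is applied to $f^*$, which is the correct target: the event $|f^*(\x_s)-f^*(\x_t)|>c\,d(\x_s,\x_t)$ has probability at most $\phi(c)$, and on that bad event the bounded increment $|f^*(\x_s)-f^*(\x_t)|\le K_g$ (the GNN Lipschitz/boundedness constant on rooted subtrees) yields the $K_\mathcal{L} K_g \phi(c)$ contribution, while on the good event the $c\,K_\mathcal{L}\,d(\x_s,\x_t)$ term and $\mathcal{L}(\y_s, f(\x_t))$ are absorbed into $\W_1(\mu_\S^f,\mu_\T^f)$ by choosing the ground cost (Eq.\ (11), with $\alpha = c K_\mathcal{L}$). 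The learned $f$ is fully known and needs no smoothness assumption; it is $f^*$ that is unknown and requires the probabilistic control.
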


\proofname. See Appendix \ssym A.2. Assuming a model can generalize well on source and target data (\ie small $\lambda^*$), one can estimate the expected target error through OT cost $\widehat{W}_1$ and the Lipschitz constant $K_g$ of the GNN function. Furthermore, if practitioners aim to improve the generalization on target domain, they can either (1) employ an UDA algorithm (\eg \Ours) to minimize $\widehat{W}_1$ or (2) change the GNN architecture to the one with a smaller $K_g$ Lipschitz constant suggested by recent studies~\cite{chuang2022tree,you2023graph}.


Note that the transportation cost term $\widehat{W}_1$ in our loss function $\mathcal{L}_\Ours$ is an empirical estimation of $\W_1(\mu_\S^{f}, \mu_\T^{f})$ in the bound. To examine that whether the transportation cost is a good domain adaptation metric, we train a 2-layer graph convolution networks on $\mathcal{G}_\S$ and compute $\widehat{W}_1$ on $\mathcal{G}_\T$. The results are presented in Figure~\ref{fig:metric-comparison}. Compared with CMD, $\widehat{W}_1$ demonstrates a more clear correlation between discrepancy and testing performance on both synthetic graphs (\ie CSBM) and real graphs (\ie PubMed) when distribution shifts are present. 

\begin{figure} 
\centering

\begin{subfigure}[b]{0.23\textwidth}
\centering
\includegraphics[width=\textwidth]{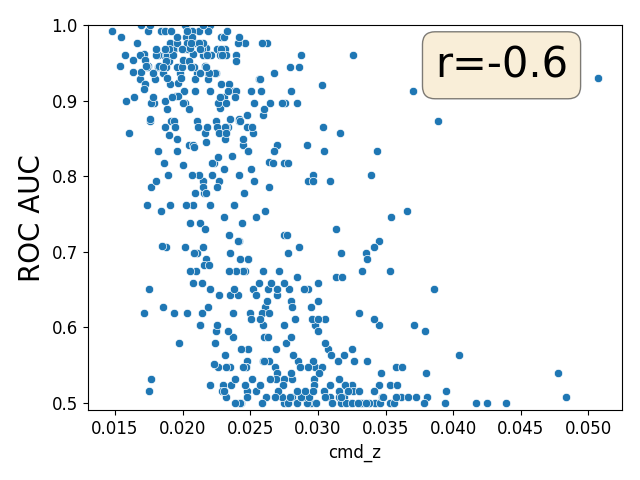}
\caption{CMD on CSBM}
\label{fig:cmd_csbm}
\end{subfigure}
\begin{subfigure}[b]{0.23\textwidth}
\centering
\includegraphics[width=\textwidth]{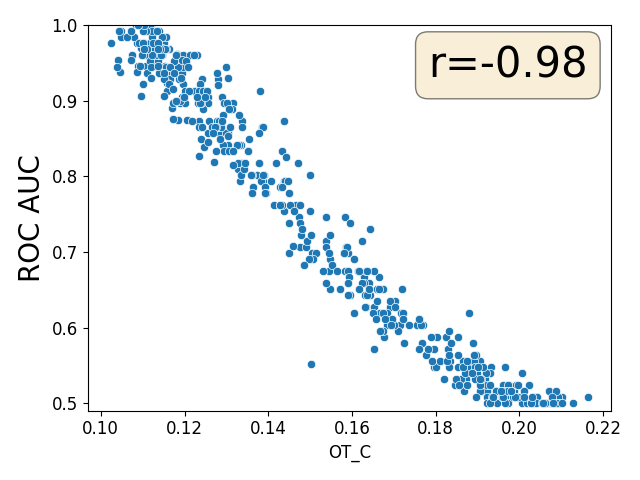}
\caption{$\widehat{W}_1$ on CSBM}
\label{fig:ot_cost_citeseer}
\end{subfigure}
\begin{subfigure}[b]{0.23\textwidth}
\includegraphics[width=\textwidth]{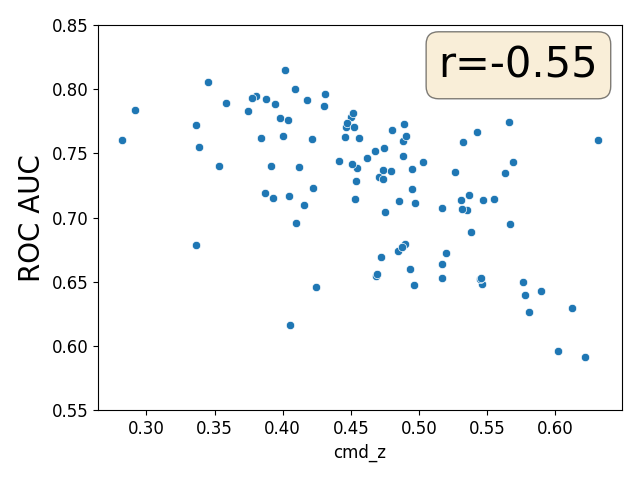}
\caption{CMD on PubMed}
\label{fig:cmd_pubmed}
\end{subfigure}
\begin{subfigure}[b]{0.23\textwidth}
\includegraphics[width=\textwidth]{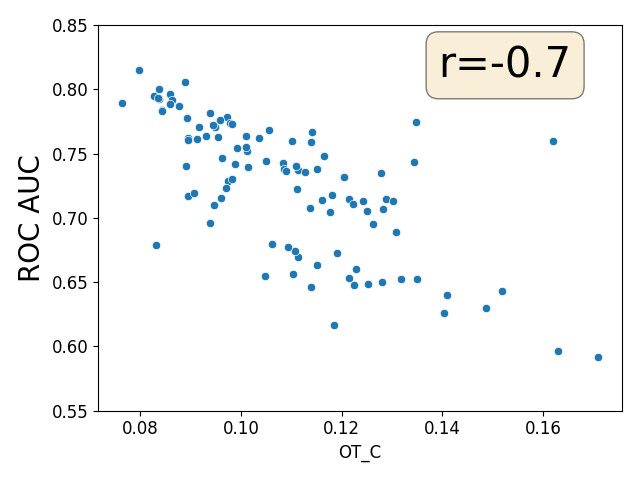}
\caption{$\widehat{W}_1$ on PubMed}
\label{fig:ot_cost_pubmed}
\end{subfigure}

\caption{Comparison of CMD and transportation cost $\widehat{W}_1$ (ours) for the same GCN model, x-axis the metric value and y-axis is the test ROC AUC. 
Each point in the plot corresponds to a pair of source and target graph. The Person correlation coefficient $r$ (the larger abs values, the better) is shown on the top-right corner.}
\label{fig:metric-comparison}

\end{figure}


\xhdr{Optimization.}
We first fix the parameters of GNN $g$ and classifier $f$ to solve the transportation plan $\Gamma^*$ using an EMD solver~\cite{bonneel2011displacement}. Then, we update the parameters of $\{f,g\}$ through back-propagation of $\mathcal{L}_\text{\Ours}$. It is also possible to update our parameters end-to-end with a neural optimal transport solver~\cite{korotin2021neural}. We perform scalable neighborhood sampling on the graph $\G$ to obtain source and target subgraph samples for the input of the GNN $g$. Specifically, we adopt a sub-graph based sampling method - GraphSAINT~\cite{zeng2019graphsaint} to obtain batch of nodes from source and target $\G^s_b \sim \G_\S, \G^t_b \sim \G_\S$, respectively. Refer to Appendix \ssym B.1 for the \Ours algorithm outline.

\xhdr{Complexity.} 
In each step, let $N$ be the size of mini-batch and $d$ be the dimension size of hidden representation $h_i \in \mathbb{R}^d$ and L classes, the additional computation cost of our method  in each epoch is due to computing the transportation cost matrix $C \in \mathbb{R}^{N \times N}$ and solving the optimal transportation $\gamma^*$. The cost matrix takes $\O(N^2(d+L))$ time and the EMD solver takes $\O(N^2)$ to solve the optimal transportation plan. Therefore, the total time complexity of \Ours is $\O(N^2+N^2(d+L))$. Due to the space limit, we conduct experiments on hyperparameter sensitivity and complexity study in Appendix \ssym C.4.



\section{Synthetic Experiments}
\label{sec:syn-experiment}
In this section, we empirically validate our theoretical insights regarding the generalization ability and transferability of graph neural networks. We aim to answer the following questions: (a) "How do  DIRL methods perform under distribution shift on graphs?" and (b) "Does \Ours provide any advantages over DIRL for GNNs?"

We do this using two different families of synthetic graphs: 
(1) CSBM graphs, specifically \texttt{syn-csbm-pq} and \texttt{syn-csbm-$\delta$}, which involve synthetic conditional shifts in both the features and structure. Each sample in the CSBM graph consists of a training and testing graph, where the testing graph demonstrates either a feature shift $\delta$ or a structure shift pq. (2) synthetic graphs constructed from real datasets, namely \texttt{syn-cora} and \texttt{syn-products}, with varying homophily ratios as described in previous work~\cite{zhu2020beyond}.
Detailed numerical results for all of the figures and the graph statistics can be found in Appendix \ssym C.1. 
In this section, we compare our method \Ours with well-known DIRL algorithms including CMD~\cite{cmd2017} and CDAN~\cite{cdan2018} using graph convolution networks~\cite{kipf2016semi}. 


\begin{figure}[h]
\centering
\begin{subfigure}{0.24\textwidth}
\centering
\includegraphics[width=\textwidth]{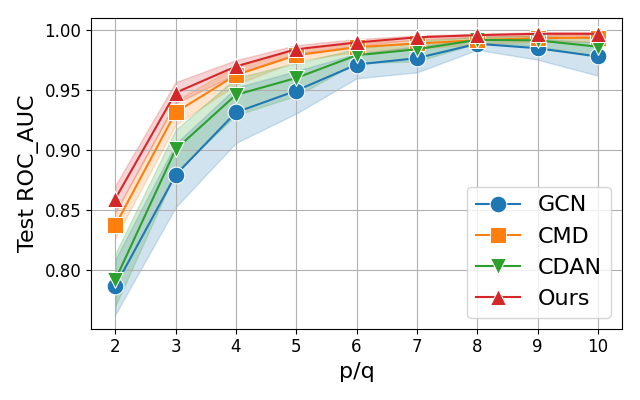}
\caption{\texttt{syn-csbm-pq}}
\label{fig:csbm_pq_adaptation}
\end{subfigure}
\hspace*{\fill}
\begin{subfigure}{0.24\textwidth}
\centering
\includegraphics[width=\textwidth]{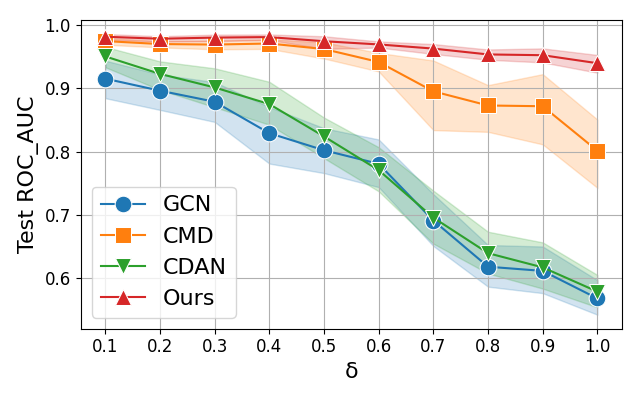}
\caption{\texttt{syn-csbm-$\delta$}}
\label{fig:csbm_delta_adaptation}
\end{subfigure}
\begin{subfigure}{0.24\textwidth}
\includegraphics[width=\textwidth]{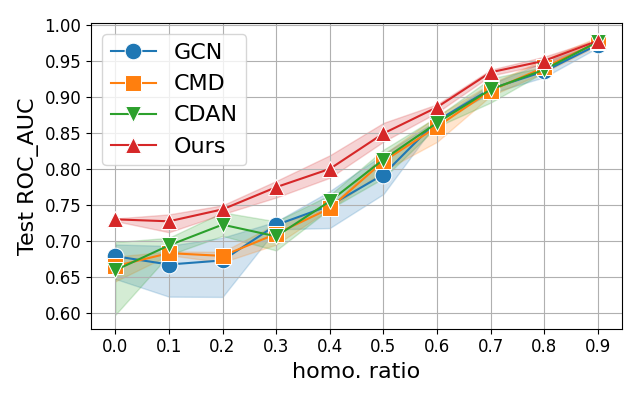}
\caption{\texttt{syn-cora}}
\label{fig:cora_pq_adaptation}
\end{subfigure}
\begin{subfigure}{0.24\textwidth}
\centering
\includegraphics[width=\textwidth]{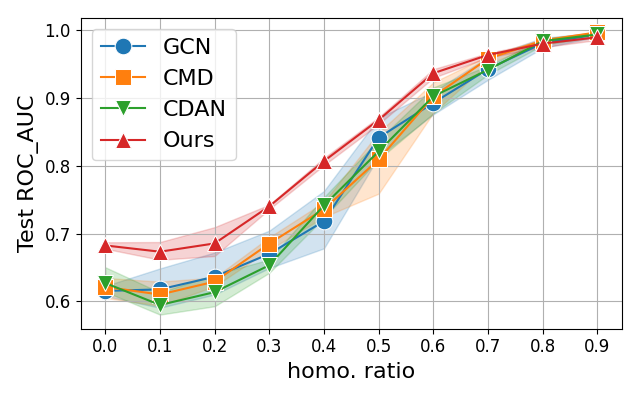}
\caption{\texttt{syn-products}}
\label{fig:products_pq_adaptation}
\end{subfigure}

\caption{Domain adaptation results on synthetic graphs.}
\label{fig:csbm_adaptation}
\vspace{-0.2em}
\end{figure}

First, we compare two DIRL algorithms - CDAN and CMD with \Ours on two synthetic CSBM datasets. We tune the hyperparameters of all three algorithms using validation data obtained from the training graph.
As illustrated in Figure~\ref{fig:csbm_pq_adaptation} and Figure~\ref{fig:csbm_delta_adaptation}, \Ours outperforms both baselines in the presence of feature and structure distribution shifts. Notably, when the tested graph exhibits increased heterophily (small $p/q$) or significant feature shifts (large $\delta$), the performance of GNNs is more adversely affected.
When distribution shifts are small, CMD enjoys similar to our method performance, confirming that DIRL methods work well with minor conditional shift.
However, the difference between two methods becomes significant when testing distribution exhibits large conditional shift. 
We attribute the sub-optimal performance of DIRL to the fact that it solely optimizes the distribution discrepancy on $\P(\h)$ while neglecting the significant conditional shift.




In our second synthetic experiments, we examine the effectiveness of \Ours on non-CSBM graphs. 
To do this, we follow the literature~\cite{zhu2020beyond} studying GNNs in the low homophily setting, where
\texttt{syn-cora} and \texttt{syn-products} are constructed from existing benchmarks via preferential attachment~\cite{barabasi1999emergence}.
We train all the compared methods on the same "easy" graph, which has a homophily ratio of 1.0, for both datasets. Subsequently, we tested the models on target graphs with varying homophily ratios, ranging from 0.0 to 0.9.
Based on our theoretical results, a target graph with a low homophily ratio is expected to result in a larger conditional shift. As depicted in Figure~\ref{fig:cora_pq_adaptation} and Figure~\ref{fig:products_pq_adaptation}, we observe that the performance of the base GCN aligns with our expectations.
\Ours still mitigate the distribution shift better than DIRL methods.

\section{Real Data Experiments}
\label{sec:real-data-exp}
For real-world graphs, we will compare \Ours with domain adaptation algorithms designed for neural networks and graph neural networks in both supervised and semi-supervised learning settings. 

\xhdr{Baselines.} In addition to the domain adaptation algorithms used in previous sections, we consider the following methods for comprehensive study under distribution shift: (1) MMD~\cite{mmd2015} and (2) DANN~\cite{ganin2016domain}. 
For graph-specific methods, we choose three representative methods: (1) UDAGCN~\cite{udagcn2020} couples domain adversarial learning with graph attention mechanism; (2) SRGNN-IW~\cite{zhu2021shift} proposes to use instance weighting technique on GNN output embeddings; (3) Graph-EERM~\cite{eerm2022} proposes to augment training graph for invariance principles in risk minimization. As for our own ablations, we report the performance of DIRL version of our model ($\beta=0$ in in Eq.~\eqref{eq:pairwise-distance}) besides two variants of our methods \Ours and \Ours++.  All models are trained a single Nvidia A6000 GPU. Configurations of different algorithms on each dataset can be found in Appendix \ssym B.2

\begin{table}[ht]
\caption{Semi-supervised classification on three different citation networks with OOD training samples. Results from the original paper~\cite{zhu2021shift} are marked $^{\dag}$. We mark the \textbf{best} and the \underline{second} best results.
}
\centering

\scalebox{0.8}{
\begin{tabular}{ll|c c c c c c c c c}
\toprule
\multicolumn{2}{c }{\multirow{2}{*}{Method}}                             & \multicolumn{3}{c }{\texttt{Cora}}            & \multicolumn{3}{c }{\texttt{Citeseer}} & \multicolumn{3}{c}{\texttt{PubMed}}                                                                                                                      \\
\multicolumn{2}{c}{}                                                    & Micro-F1 & Macro-F1 & $\Delta$Acc  & Micro-F1 & Macro-F1 & $\Delta$Acc & Micro-F1 & Macro-F1 & $\Delta$Acc  \\ \midrule
\multicolumn{2}{l}{IID training}                   &  80.8 \scriptsize{$\pm$ 1.5}       &  80.1 \scriptsize{$\pm$ 1.3}    &  0               &  70.2 \scriptsize{$\pm$ 1.9}               &  66.8 \scriptsize{$\pm$ 1.7}       &  0     &   79.7 \scriptsize{$\pm$ 1.4}  & 78.8 \scriptsize{$\pm$ 1.4}    &       0                 \\
\midrule
\multicolumn{2}{l}{OOD training}                     & 71.3 \scriptsize{$\pm$ 4.1}       &  69.2 \scriptsize{$\pm$ 3.4}   &  9.5               &  63.4 \scriptsize{$\pm$ 1.8}   & 61.2 \scriptsize{$\pm$ 1.6}        &  6.9    & 63.4 \scriptsize{$\pm$ 4.2} & 58.7 \scriptsize{$\pm$ 7.0} & 16.4 \\
\multicolumn{2}{l}{MMD} &  71.5 \scriptsize{$\pm$ 4.9}       &  69.5 \scriptsize{$\pm$ 4.6}   &  9.3         &  64.4 \scriptsize{$\pm$ 1.2}               &  62.0 \scriptsize{$\pm$ 1.1}     &  5.9     &  66.3 \scriptsize{$\pm$ 4.2}     & 63.5 \scriptsize{$\pm$ 5.9}   &   13.4   \\
\multicolumn{2}{l}{CMD$^{\dag}$}               & \underline{72.1 \scriptsize{$\pm$ 4.4}}         &   69.8 \scriptsize{$\pm$ 3.7}   &  8.7   &  63.9 \scriptsize{$\pm$ 0.7}    & 61.8 \scriptsize{$\pm$ 0.6}        &   6.4       & 69.4\scriptsize{$\pm$ 3.4}         &   67.6 \scriptsize{$\pm$ 4.0} & 10.4               \\

\multicolumn{2}{l}{DANN}    &  71.5 \scriptsize{$\pm$ 5.0}       &  69.5 \scriptsize{$\pm$ 4.6} &     9.3     & 64.7 \scriptsize{$\pm$ 1.2} & 62.3 \scriptsize{$\pm$ 1.1} & 5.6  & 64.5 \scriptsize{$\pm$ 4.9} & 60.6 \scriptsize{$\pm$ 7.8} & 15.2   \\

\multicolumn{2}{l}{CDAN}                   &  71.5 \scriptsize{$\pm$ 5.1}   &   69.5 \scriptsize{$\pm$ 4.7}  &   9.3   & 64.6 \scriptsize{$\pm$ 1.3}  & 62.2 \scriptsize{$\pm$ 1.2}  &  5.6  & 64.1 \scriptsize{$\pm$ 5.0}  & 59.9 \scriptsize{$\pm$ 7.9} &   15.6             \\
  \midrule
  \multicolumn{2}{l}{UDAGCN}                   &  36.2 \scriptsize{$\pm$ 4.5}     & 35.4 \scriptsize{$\pm$ 4.3} &     44.6  & 33.8 \scriptsize{$\pm$ 5.1} & 31.5 \scriptsize{$\pm$ 7.7} &  36.4  & 40.6 \scriptsize{$\pm$ 6.8} & 34.9 \scriptsize{$\pm$ 6.8} &  39.1   \\
\multicolumn{2}{l}{EERM}                   &  68.3 \scriptsize{$\pm$ 4.3}       &  66.2 \scriptsize{$\pm$ 3.9}   &  12.5               &  62.3 \scriptsize{$\pm$ 1.0}               &  59.5 \scriptsize{$\pm$ 1.0}       &  7.9     &  61.6 \scriptsize{$\pm$ 4.8}  & 56.8 \scriptsize{$\pm$ 7.7}   &       18.1                 \\
\multicolumn{2}{l}{SRGNN-IW$^{\dag}$}                 & 72.0 \scriptsize{$\pm$ 3.2}         &   69.5 \scriptsize{$\pm$ 3.7}   &  8.8  &  \textbf{66.1 \scriptsize{$\pm$ 0.9}}   & \underline{63.4 \scriptsize{$\pm$ 0.9}}       &   4.2 &  66.4 \scriptsize{$\pm$ 4.0}               &  64.0  \scriptsize{$\pm$ 5.5} &  13.4 \\

  \midrule

\multicolumn{2}{l}{\Ours-DIRL}                   &  71.7 \scriptsize{$\pm$ 4.7}      & 69.7 \scriptsize{$\pm$ 4.3} &    9.1 & 64.6 \scriptsize{$\pm$ 1.1} & 62.2 \scriptsize{$\pm$ 1.0} & 5.6 & 68.3 \scriptsize{$\pm$ 3.9} & 66.5 \scriptsize{$\pm$ 4.7} & 11.4       \\
\multicolumn{2}{l}{\Ours}                   &  71.7 \scriptsize{$\pm$ 4.7}      & \underline{70.2 \scriptsize{$\pm$ 2.7}} &   9.1  & 65.3 \scriptsize{$\pm$ 0.8} & {63.3 \scriptsize{$\pm$ 0.8}} &    4.9 & \underline{71.5 \scriptsize{$\pm$ 2.9}} & \underline{70.4 \scriptsize{$\pm$ 3.1}} & 8.2        \\
\multicolumn{2}{l}{\Ours++}                 &  \textbf{72.6 \scriptsize{$\pm$ 3.1}}        &   \textbf{70.7\scriptsize{$\pm$ 3.0}}               & 8.2 &    \underline{65.6 \scriptsize{$\pm$ 0.9}}   &  \textbf{63.5 \scriptsize{$\pm$ 0.9}}       &   4.6    &  \textbf{73.0 \scriptsize{$\pm$ 2.5}}                &  \textbf{71.9 \scriptsize{$\pm$ 2.5}}  &  6.7 \\
\bottomrule
\end{tabular}}
\label{tab:semi-supervised}
\vspace{-1em}
\end{table}

\subsection{Semi-supervised Node Classification} \vspace{-3pt}
GNNs are widely recognized for their effectiveness in node classification tasks, particularly when dealing with a limited amount of labeled data. In semi-supervised classification, source data is a small number of training nodes and target data are all of the remaining nodes in the same graph. 
Recently, SRGNN~\cite{zhu2021shift} found biased training data in semi-supervised learning can cause dramatic accuracy loss; they provide the algorithm to generate biased training nodes (refered to as OOD training in Table \ref{tab:semi-supervised}) on three semi-supervised learning benchmarks: Cora, Citeseer and PubMed~\cite{sen2008collective}. 
We choose the best-performing GNN architecture from their paper - APPNP~\cite{klicpera2018predict} and report the Micro-F1, and Macro-F1 for each method and the accuracy loss compared with IID training data. We are able to reproduce the performance gap between IID and OOD training data ($\Delta$ in in Table \ref{tab:semi-supervised}). 
We begin by noting that most of the general domain adaptation algorithms such as CMD, MMD, and DANN can help improve the performance because conditional shift is small in this setting.
Among these algorithms, we find that directly optimizing discrepancy metrics seems to be more effective and robust (smaller average loss and deviation over 100 runs) than adversarial methods (CDAN and DANN) which often require more tuning. Across the three datasets, \Ours++ consistently achieves top-2 performance, while \Ours (\ie, only optimizing conditional shift) generally ranks second best. In addition, \Ours-DIRL demonstrates similar performance to DIRL methods such as CMD and MMD. These observations suggest that the primary improvements stem from minimizing the estimated conditional shift $\widehat{\W}_1(\mu_\S^f, \mu_\T^f)$.

\begin{table}[h]
\caption{Domain adaptation on node and graph classification. We mark the \textbf{best} and the \underline{second} best accuracy.}
\centering
\scalebox{0.8}{
\setlength{\tabcolsep}{4pt}{
\begin{tabular}{l ccc|c | ccc|c}\\\toprule  
{\multirow{2}{*}{Method}}                             & \multicolumn{4}{c }{Node Classification (Micro-F1)}            & \multicolumn{4}{c }{Graph Classification (AUC)} \\
 & \small{ACM-DBLP}$_\text{small}$ & \small{ACM}$_\text{time}$ & \small{ACM-DBLP}$_\text{large}$ & Avg. $\Delta$ & BACE & BBBP & Clintox & Avg. $\Delta$ \\ \midrule
Base model   &  68.1 \scriptsize{$\pm$ 2.1}   &  78.8 \scriptsize{$\pm$ 1.0} &    81.1 \scriptsize{$\pm$ 0.2} & &  64.8 \scriptsize{$\pm$ 2.8} & 71.0 \scriptsize{$\pm$ 8.7} & 52.8 \scriptsize{$\pm$ 3.3} \\  
CMD$^{\dag}$  &   \underline{75.5 \scriptsize{$\pm$ 4.4}}             &   79.4 \scriptsize{$\pm$ 0.7}    &   75.2 \scriptsize{$\pm$ 0.8} & +0.97 & 60.4 \scriptsize{$\pm$ 1.4} & 72.0 \scriptsize{$\pm$ 1.8} & 55.0 \scriptsize{$\pm$ 5.0} & -0.40 \\ 
DANN  & 70.1 \scriptsize{$\pm$ 1.8}           & 79.6 \scriptsize{$\pm$ 0.4}       & 81.6 \scriptsize{$\pm$ 0.4} & +1.10  & 67.4 \scriptsize{$\pm$ 2.9} & \underline{74.0 \scriptsize{$\pm$ 2.3}} & \underline{61.6 \scriptsize{$\pm$ 3.6}} & \underline{+4.80} \\
CDAN                   &  75.3 \scriptsize{$\pm$ 4.3}  &         79.3 \scriptsize{$\pm$ 1.3} & {82.1 \scriptsize{$\pm$ 0.3}} & \underline{+2.90} &     \textbf{69.1 \scriptsize{$\pm$ 1.8}} & {73.5 \scriptsize{$\pm$ 1.7}} & {57.5 \scriptsize{$\pm$ 2.4}} & +3.83  \\
\midrule
UDAGCN      & 66.4 \scriptsize{$\pm$ 5.1}  &         79.3 \scriptsize{$\pm$ 0.5}  &         78.3 \scriptsize{$\pm$ 2.6} & 
 -1.33 & \underline{67.9 \scriptsize{$\pm$ 1.4}} & 73.3 \scriptsize{$\pm$ 2.1} & 60.7 \scriptsize{$\pm$ 4.8} & + 4.43 \\
EERM            & 64.9 \scriptsize{$\pm$ 3.5} &       77.3 \scriptsize{$\pm$ 0.4} &         81.0 \scriptsize{$\pm$ 0.4}  &  -1.60 & N/A & N/A & N/A & N/A \\
SRGNN-IW                 & 69.2 \scriptsize{$\pm$ 1.6}  &         79.5 \scriptsize{$\pm$ 1.1} &        81.4 \scriptsize{$\pm$ 0.4} & 0.70 & 65.2 \scriptsize{$\pm$ 3.3} & 71.7 \scriptsize{$\pm$ 2.8}& 57.3 \scriptsize{$\pm$ 3.6} & +1.87 \\ 
\midrule
\Ours-DIRL                  & 71.6 \scriptsize{$\pm$ 2.3} &         \underline{80.2 \scriptsize{$\pm$ 0.4}} &          \underline{82.3 \scriptsize{$\pm$ 0.4}} & +2.03 & 65.4 \scriptsize{$\pm$ 2.4} & 69.3 \scriptsize{$\pm$ 4.0} & 57.9 \scriptsize{$\pm$ 3.6} & +1.33 \\
\Ours             & 74.0 \scriptsize{$\pm$ 4.7}  &         {80.1 \scriptsize{$\pm$ 0.5}} &        {82.1 \scriptsize{$\pm$ 0.3}} & +2.73 & {64.7 \scriptsize{$\pm$ 2.0}} & 70.0 \scriptsize{$\pm$ 4.2} & 57.2 \scriptsize{$\pm$ 2.1} & +1.10    \\

\Ours++                &  \textbf{78.5 \scriptsize{$\pm$ 4.0}} &         \textbf{80.3 \scriptsize{$\pm$ 0.8}} &      \textbf{82.5 \scriptsize{$\pm$ 0.3}} & \textbf{+4.43} &  67.8 \scriptsize{$\pm$ 2.5} & \textbf{74.4 \scriptsize{$\pm$ 3.0}} & \textbf{61.7 \scriptsize{$\pm$ 2.4}} & \textbf{+4.83} \\
\bottomrule
\end{tabular}
}
}
\vspace{-1em}
\label{tab:supervised-exp}
\end{table}
\subsection{Supervised Node and Graph Classification}
In a fully-supervised setting, transfer learning is commonly employed to transfer knowledge across different domains for graph-structured data.  This involves training a model on source graphs and inferring on target graphs.
We conduct domain adaptation experiments on citation networks \cite{tang2008arnetminer} and molecular graphs \cite{hu2020open} for two tasks. The first task involves node classification by introducing domain shift between ACM and DBLP graphs, as well as time shift within the ACM graphs. The second task focuses on graph classification with scaffold shift, where the training and testing molecular graphs have different scaffold patterns.
For node classification and graph classification, we adopt a 2-layer GCN~\cite{kipf2016semi} and a 5-layer GraphSAGE~\cite{hamilton2017inductive}, respectively, following established practices. Specifically, for graph classification, we employ mean pooling to obtain the graph representations.

In Table~\ref{tab:supervised-exp}, we make several key observations: (1) different algorithms exhibit varying performance under different settings, primarily due to the presence of various types of distribution shift; (2) on node classification, \Ours and its variants usually outperforms the other baselines with a clear margin. This can be attributed to the fact that our approach has been theoretically designed to excel in node classification scenarios; (3) Domain adaptation algorithms, such as DANN, that are originally designed for neural networks exhibit better performance in graph classification tasks, because the graph classification task shares closer similarities with the image domain. Nevertheless, it is noteworthy that \Ours++ consistently achieved top-3 rankings across all tasks and highest average improvement (\ie Avg. $\Delta$), indicating our potential usage on graph property predictions. For further details on the dataset and complementary experiments, please refer to Appendix \ssym C.

\section{Conclusion}
\label{sec:con}
In this work we establish the first theoretical connection between the inductive bias of GNNs and distribution shift by quantifying conditional shift.
Our novel theoretical results show that conditional shift is often exacerbated by GNNs, explaining the 
limited performance of popular DIRL methods on graph data.
To remedy this shift in the latent space, we present a graph domain adaptation framework based on our theoretical results. 
Using a number different experiments on both synthetic and real data , we demonstrate that our method \Ours results in a robust improvement on different kinds of domain shifts.
As for future work, we have two notable directions to explore: (1) extend our analysis to other types of graph neural networks (2) develop more advanced GNNs following our theoretical results for graph domain adaptation. 

\unhidefromtoc

\clearpage
\bibliography{ref}
\bibliographystyle{plain}
\newpage

\appendix

\tableofcontents
\newpage

\section{Theory details}
\subsection{Proof of Theorem 3.1}
\label{proof_1}
In Definition 3.3, we made several simplifications on original CSBM model to investigate its OOD generalization \wrt structure and feature distribution shifts. The original $\text{CSBM}(\mu,\nu,p,q)$ is defined to have two different class means $\mu$ and $\nu$.
Given training and testing graphs as $\G_\S \sim \text{CSBM}(\mu,\nu,p,q)$ and $\G_\T \sim \text{CSBM}(\mu',\nu',p',q')$, we let $\nu = -\mu$ in CSBM by making $\vec{0}$ the middle point of original feature mean of two classes. Without loss of generality, we let two graphs have same amount of nodes $n=n'$ and edge density $D=D'$. 
Here we restate the pseudo conditional shifts $\Delta_{\y|\x}$ on the hypothesis function used in Theorem 3.1. In this context, the function $d$ is defined as an indicator function, which serves as a realization of Definition 3.2.

\begin{equation}
\Delta_{\y|\x} = \mathbb{E}_{\x \sim \P_t(\x)}\left(\mathbb{I}\left[\arg\max_y\P_s(\y|\x)\neq\arg\max_y\P_t(\y|\x) \right]\right),
    \label{eq:supp_conditional_shift}
\end{equation}


\begin{theorem}[{Conditional Shift in GNNs}]
Let the source graph $\mathcal{G}_\S$ = CSBM($\mu$, $p$, $q$), and a target graph $\mathcal{G}_\T$ = CSBM($\mu'$, $p'$, $q'$), where $D$ and $D'$ represent their average degrees respectively. Additionally, let $\Phi(\cdot)$ denote the cumulative distribution function (CDF) of a multivariate Gaussian distribution defined by distance. 
Then the introduced distribution shift between $\mathcal{G}_\S$ and $\mathcal{G}_\T$ can be quantified via the estimated conditional shift of  $\x$ and $\h$ as:
\begin{equation} \label{eq:supp_gcn_shift}
\Delta_{\y|\x} = \frac{\Phi\left((1+\delta)\| \mu \|) - \Phi((1-\delta)\| \mu\| \right) }{2}, \Delta_{\y|\h} = \frac{\Phi(\|\mu_{h,-1}' \|) - \Phi(\| \mu_{h,1}'\| ) }{2}, 
\end{equation}

where $\mu_{h,1}' = \sqrt{D'}\frac{p'-q'}{p'+q'}\mu - \sqrt{D'} \delta\mu $ and $\mu_{h,-1}' = \sqrt{D'}\frac{q'-p'}{p'+q'}\mu - \sqrt{D'} \delta \mu$.
\end{theorem}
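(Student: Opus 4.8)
The plan is to reduce both quantities to one-dimensional Gaussian tail probabilities by exploiting the fact that, for a CSBM with class means $\pm\mu$ and isotropic Gaussian noise, the Bayes-optimal decision boundary is the hyperplane through the origin orthogonal to $\mu$. First I would handle $\Delta_{\y|\x}$. The source model $\P_s(\y|\x)$ has class conditionals $\N(\pm\mu, \tfrac1d \I)$ (after the $\sqrt d$ rescaling the features are effectively unit-variance in the relevant direction), so $\arg\max_y \P_s(\y|\x)$ is determined by the sign of $\langle \mu, \x\rangle$. Under the feature shift the target class means are $-(1+\delta)\mu$ and $(1-\delta)\mu$, so $\arg\max_y\P_t(\y|\x)$ is governed by the sign of $\langle\mu,\x\rangle + \delta\|\mu\|^2$ (the midpoint of the two target centroids is shifted by $-\delta\mu$). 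The two classifiers disagree exactly on the slab $\{x : -\delta\|\mu\|^2 < \langle\mu,\x\rangle < 0\}$ (or its mirror, depending on signs), and I would compute the probability of this slab under $\P_t(\x) = \tfrac12\N(-(1+\delta)\mu,\cdot) + \tfrac12\N((1-\delta)\mu,\cdot)$. Projecting onto the unit vector $\mu/\|\mu\|$ turns each Gaussian into a scalar normal; evaluating the slab probability under each mixture component and collecting terms yields $\tfrac12(\Phi((1+\delta)\|\mu\|) - \Phi((1-\delta)\|\mu\|))$, which is the claimed formula. The key bookkeeping is tracking which component contributes which $\Phi$ term and verifying the factor $\tfrac12$ from the mixture weights.

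Next I would treat $\Delta_{\y|\h}$. Here the essential step, flagged in the theorem's proof sketch, is to pass the CSBM through a one-layer GCN and renormalize. After aggregation $H = \tilde A X$, a node of class $y$ has representation concentrating around $y\sqrt{D}\tfrac{p-q}{p+q}\mu$ with noise variance shrinking like $1/(dD)$ (averaging over $\approx D$ neighbors), so scaling by $\sqrt{D}$ — or equivalently absorbing constants so the latent noise is standard — makes the latent class means $\pm\sqrt{D'}\tfrac{p'-q'}{p'+q'}\mu$ on the target graph, further displaced by the feature shift to give $\mu_{h,1}'$ and $\mu_{h,-1}'$ as defined. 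The source latent classifier is again the hyperplane through the origin orthogonal to $\mu$ (since on the source graph the latent centroids are symmetric about $0$), while the target's optimal classifier sits at the midpoint $\tfrac12(\mu_{h,1}' + \mu_{h,-1}')$. I would then repeat the slab-probability computation verbatim in latent coordinates: disagreement occurs on the region between these two hyperplanes, and integrating the target latent mixture over it produces $\tfrac12(\Phi(\|\mu_{h,-1}'\|) - \Phi(\|\mu_{h,1}'\|))$. The appearance of the \emph{norms} $\|\mu_{h,\pm1}'\|$ rather than signed inner products reflects that each centroid's distance to the origin hyperplane is exactly $\|\mu_{h,\pm1}'\|$ once one checks that $\mu_{h,1}'$ and $\mu_{h,-1}'$ remain (anti)parallel to $\mu$, which they do since both are scalar multiples of $\mu$.

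The main obstacle I anticipate is making the latent-space reduction rigorous: the GCN output $\h'$ is not exactly Gaussian — it is a sum of $\text{Ber}$-weighted Gaussians with a random number of terms — so "scaling $\h'$ into a standard Gaussian distribution" requires either a concentration argument (the degree concentrates, the averaged noise concentrates to a Gaussian by a CLT-type statement) or working in the regime where these approximations are exact in expectation, as the CSBM analyses of \cite{baranwal2021graph,baranwal2022effects} do. I would state precisely which limiting/expectation regime is used (e.g.\ replacing the random aggregation by its mean degree $D$ and treating the pooled noise as exactly $\N(0, \frac{1}{dD}\I)$), cite those works for the justification, and then the rest is the elementary two-hyperplane slab computation done twice. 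A secondary check is confirming the $\arg\max$ disagreement region is genuinely the \emph{symmetric} slab so that the two mirror-image contributions combine into the single clean difference of $\Phi$'s rather than leaving cross terms.
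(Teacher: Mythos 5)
Your proposal matches the paper's proof essentially step for step: both reduce $\Delta_{\y|\x}$ and $\Delta_{\y|\h}$ to the probability of the one-dimensional ``disagreement slab'' between the optimal source and target decision hyperplanes, evaluated under the target distribution (the paper phrases this by conditioning on each class and averaging, which is the same arithmetic as your mixture-over-the-slab computation), and both then repeat the calculation in latent coordinates after rescaling the one-layer GCN output to a standard Gaussian with class means $\mu_{h,\pm 1}'$. The caveat you flag about the GCN output being only approximately Gaussian is apt --- the paper's Appendix~A.1 writes $h_i\sim\N(\cdot,\cdot)$ directly without spelling out the concentration or expectation step you identify, so you are pointing at an implicit assumption in the paper rather than a gap in your own argument.
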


\begin{prop}
Through training with hinge loss, the linear classifier $f$ on original feature $\x$ and GNN latent space $\h$ have the same optimal hyperplane $\mathcal{P}=\{\x|\w^Tx+b=0\}$ characterized by $\f(\w^*, b^*)$, $\w^* = \mu$ and $b^* =0$.
\end{prop}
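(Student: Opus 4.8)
The plan is to read the statement as a claim about the optimal \emph{separating hyperplane} and to derive it from two symmetries of the data-generating law together with convexity of the hinge risk; I argue at the population level (equivalently, in the large-sample limit), since the empirical training objective is its sample average. Write the population hinge risk as $L(\w,b)=\mathbb{E}_{(\x,y)}\big[\max(0,\,1-y(\w^\top\x+b))\big]$, which is jointly convex in $(\w,b)$, so it suffices to argue about $L$, and the same argument runs verbatim in the latent space with $\x$ replaced by $\h$. Under the CSBM the conditional law of $\x$ given $y$ is $\mathcal{N}(y\mu,\tfrac1d I)$ with balanced priors $\tfrac12$; by the scaling of the one-layer GCN output used for Theorem~\ref{thm_1}, the conditional law of $\h$ given $y$ is (asymptotically) $\mathcal{N}(c'\,y\,\mu,\Sigma)$ with $c'=\sqrt D\,\tfrac{p-q}{p+q}$ and $\Sigma$ isotropic. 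Thus in both spaces the class means lie along the single direction $\mu$ and the noise is rotationally symmetric about it.

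First I would fix $b^\ast=0$. The joint law of $(\x,y)$ — and likewise of $(\h,y)$ — is invariant under $(\x,y)\mapsto(-\x,-y)$, and substituting this into $L$ gives $L(\w,b)=L(\w,-b)$ for every $\w$. Convexity of $b\mapsto L(\w,b)$ then yields $L(\w,0)\le\tfrac12\big(L(\w,b)+L(\w,-b)\big)=L(\w,b)$, so $b=0$ is optimal for every $\w$.

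Next I would show an optimal normal vector is parallel to $\mu$. For every orthogonal $R$ with $R\mu=\mu$, the conditional law $\mathcal{N}(y\mu,\tfrac1d I)$ is invariant under $\x\mapsto R\x$ (since $R^\top\mu=\mu$ and $R^\top\!\tfrac1d I\,R=\tfrac1d I$), hence $L(R\w,0)=L(\w,0)$; the same holds in latent space because $\Sigma$ is isotropic and the latent mean is along $\mu$. Averaging $R\w^\ast$ over this group of rotations and invoking Jensen's inequality for the convex map $\w\mapsto L(\w,0)$ produces a minimizer equal to the $\RR\mu$-component of $\w^\ast$, i.e.\ a minimizer of the form $c\mu$. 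To see $c\neq 0$, note $L(0,0)=1$ while $\tfrac{d}{dc}L(c\mu,0)\big|_{c=0}=-\|\mu\|^2<0$ (near its kink the hinge is affine with slope $-1$, and the noise contribution $y\mu^\top Z/\sqrt d$ has mean zero), so the risk strictly decreases as $c$ increases from $0$; hence the optimal $c$ is nonzero (positive when $p>q$). Since $\{\x:(c\mu)^\top\x+0=0\}=\{\x:\mu^\top\x=0\}$ is independent of $c\neq0$, the optimal separating hyperplane is $\mathcal{P}=\{\x:\mu^\top\x=0\}$ in both the input and latent spaces, which is exactly the hyperplane of $\f(\w^\ast,b^\ast)$ with $\w^\ast=\mu$, $b^\ast=0$.

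\textbf{Main obstacle.} The delicate point is non-uniqueness: the hinge risk does not pin down the magnitude of $\w^\ast$, so the statement is genuinely about the hyperplane, not the weight vector, and the orbit-averaging step must be justified carefully (existence of a minimizer, measurability of the group action, and the fact that a group-average of minimizers of a convex functional is again a minimizer). The other item needing care is transporting the whole argument to the GCN latent space, which relies on the asymptotic Gaussian description of the one-layer GCN output with isotropic covariance established in Appendix~A.1; degenerate cases such as $p=q$, where the latent class means collapse to the origin, must be excluded for the latent-space claim to make sense.
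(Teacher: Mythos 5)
Your proof is correct, and it takes a genuinely different (and more thorough) route than the paper. The paper's justification of this proposition is a single remark inside the Appendix~A.1 proof of Theorem~\ref{thm_1}: "Since $\x$ is a standard Gaussian, the output of the $\arg\max$ operator is identical to the optimal $\f^*(x)$." In other words, the paper simply invokes the Bayes-optimal decision rule for a balanced, isotropic two-Gaussian mixture (which is $\mathrm{sign}(\mu^\top\x)$), implicitly identifies this with the hinge-loss minimizer, and moves on. You instead prove directly that the hinge-loss minimizer lives on the hyperplane $\{\x:\mu^\top\x=0\}$ by exploiting two symmetries of the data law: the joint flip $(\x,y)\mapsto(-\x,-y)$ pins $b^\ast=0$ via convexity of $b\mapsto L(\w,b)$, and orbit-averaging over the stabilizer of $\mu$ in $O(d)$ projects $\w^\ast$ onto $\mathrm{span}(\mu)$ via Jensen; the negative one-sided derivative at $c=0$ rules out the trivial $\w^\ast=0$. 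This buys a self-contained argument about the surrogate-loss minimizer rather than an appeal to Bayes optimality, and it makes explicit what the statement actually controls — the hyperplane, not the magnitude of $\w^\ast$. Your caveats are also well placed: the GCN-latent version of the argument needs the approximate isotropic Gaussian form of the one-layer GCN output used in Appendix~A.1, and the $p=q$ case (latent means collapsing to the origin) must be excluded, which the paper does not flag.
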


\begin{proof}

On a CSBM graph $\G(\mu,p,q)$, 
the data distribution on feature $\x$ is,
\begin{equation}
\begin{split}
    {x}_i & \sim \N\left( \mu,  \mathbf{I}\right), \ y_i=1, \\
    {x}_i & \sim \N\left(-\mu,  \mathbf{I}\right), \ y_i=-1
\end{split}
\end{equation}

Since $\x$ is a standard Gaussian, the output of the $\arg\max$ operator is identical to the optimal $\f^{*}(x)$.
Furthermore, the distributions on the source and target share the same support. Thus, the indicator function in Equation~\ref{eq:supp_conditional_shift} can be simplified as the expected difference in predictions between the optimal source classifier $\f$ and the optimal target classifier $\f^\prime$ on the target data, that is,
\begin{equation}
\Delta_{\y|\x}  = \mathbb{E}_{\x \sim \P_t(\x)} \left(\mathbb{I}\left[\f(x)\neq \f^\prime(x) \right]\right),
\end{equation}

We first discuss the conditional shift on the feature $\x$ of the target graphs. Since we assume that the distribution shift on the feature $\mu$ is controlled by $\delta$, the centers of the two classes on the target graphs are located at $-(1+\delta)\mu$ and $(1-\delta)\mu$:
\begin{equation}
\begin{split}
    {x}_i^\prime & \sim \N\left( (1-\delta)\mu,  \mathbf{I}\right), \ y_i=1, \\
    {x}_i^\prime & \sim \N\left(-(1+\delta)\mu,  \mathbf{I}\right), \ y_i=-1,
\end{split}
\label{eq:supp_pdf_target}
\end{equation}
The optimal classifier is $\f(\mu, 0)$ on source graph and $\f^\prime(\mu, \delta \mu)$ on target CSBM graph. We further partition the computation of $\Delta_{\y|\x}$ on two classes, that is $\Delta_{\y=1|\x}+\Delta_{\y=-1|\x}$. When $y_i^\prime=1$, the different predictions (\ie $f(x_i)\neq f'(x_i)$) are those samples between $0$ and $-\delta \mu$ in 1-dimension case. Considering the probability density function $\P_t(\x)$ in Equation~\ref{eq:supp_pdf_target}, $\Delta_{\y=1|\x}$ is calculated as,
\begin{equation}
    \Delta_{\y=1|\x} = \frac{1}{\sqrt{2 \pi}} \int_{-\delta \mu}^{0} \exp(\{-\frac{(t-(1-\delta)\mu)^2}{2}\}) \,dt
\end{equation}

The CDF of the standard Gaussian distribution is denoted by the $\Phi$ function.
\begin{equation}
    \Phi(x) = \frac{1}{\sqrt{2 \pi}} \int_{-\infty}^{x} \exp(\{-\frac{t^2}{2}\}) \,dt
\end{equation}
In standard multivariate (d > 1) Gaussian distribution, we define the CDF as a monotonic function regarding the distance to the Gaussian mean $\Phi(\|\cdot\|)$.

To represent the conditional shift use $\Phi$, we flip the axis $x=-x$ and translate the distribution into a standard Gaussian $\N(0, \mathbf{I})$ by moving $1- \delta \mu$ as described in Figure~\ref{fig:cdf_proof}.
\begin{equation}
    \Delta_{\y=1|\x} = \Phi(\|\mu\|) - \Phi(\|\mu-\delta\mu\|),
\end{equation}
\begin{figure}[h]
    \centering
    \includegraphics[width=0.95\linewidth]{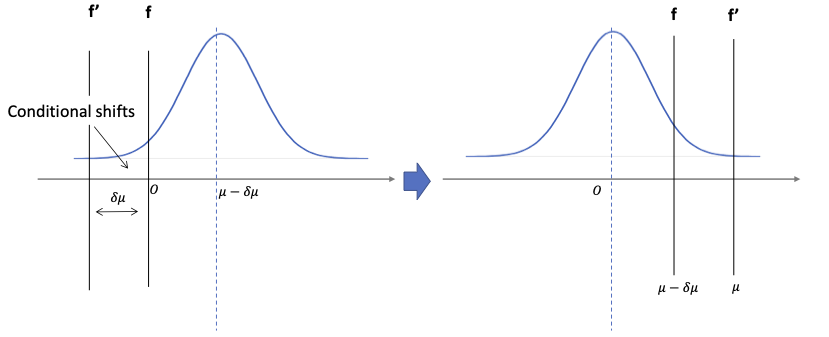}
    \caption{Representing $\Delta_{\y=1|\x}$ with $\Phi$}
    \label{fig:cdf_proof}
\end{figure}

Similarly, for class $y_i=-1$, we can have,
\begin{equation}
    \Delta_{\y=-1|\x} = \Phi(\|\mu+\delta\mu\|) - \Phi(\|\mu\|),
\end{equation}
Since we have the same amount of nodes for each class, we get the value of conditional shifts on original features and weighted average of two classes,
\begin{equation}
    \Delta_{\y|\x} = \frac{\Delta_{\y=-1|\x}+ \Delta_{\y=1|\x}}{2} = \frac{\Phi(\|\mu+\delta\mu\|) - \Phi(\|\mu-\delta\mu\|)}{2} =\frac{\Phi\left((1+\delta)\| \mu \|) - \Phi((1-\delta)\| \mu\| \right) }{2},
\end{equation}

Now, we are ready to discuss the conditional shift on the GCN transformed features $\h$.
Now, we are ready to discuss the conditional shift on GCN transformed features $\h$.
The feature of a node in a CSBM graph that has been transformed using GCN is obtained as a weighted mean of $D$ (average degree) distinct Gaussian random variables. Among these variables, $\frac{p}{p+q}$ constitute the intra-class variables, while $\frac{q}{p+q}$ make up the inter-class variables. As a result, the data distribution on $\h$ is as follows:
\begin{equation}
\begin{split}
    {h}_i &\sim \N\left( \frac{p-q}{p+q}\mu, \frac{1}{\sqrt{D}} \mathbf{I}\right), \ y_i=1, \\
    {h}_i &\sim \N\left( \frac{q-p}{p+q}\mu, \frac{1}{\sqrt{D}} \mathbf{I}\right), \ y_i=-1
\end{split}
\end{equation}


We rescale the Gaussian distribution output by graph convolution to standard Gaussian distribution,
$$
h_i  \sim \N \left( \sqrt{D} \cdot \frac{p-q}{p+q}\mu, \mathbf{I} \right), \text { for } y_i=1.
$$
Consequently, on target graph,
\begin{equation}
\begin{split}
    {h}_i^\prime & \sim \N\left( \sqrt{D'} \cdot \frac{p'(1-\delta)\mu-q'(1+\delta)\mu'}{p'+q'},  \mathbf{I}\right), \ y_i=1, \\
    {h}_i^\prime & \sim \N\left( \sqrt{D'} \cdot \frac{q'(1-\delta)\mu-p'(1+\delta)\mu'}{p'+q'},  \mathbf{I}\right), \ y_i=-1
\end{split}
\end{equation}
Let $\mu_{h,1}' = \sqrt{D'}\frac{p'-q'}{p'+q'}\mu - \sqrt{D'} \delta\mu $ and $\mu_{h,-1}' = \sqrt{D'}\frac{q'-p'}{p'+q'}\mu - \sqrt{D'} \delta \mu$, we are ready to finish the proof by calculation conditional shifts on target data.
\begin{equation}
    \Delta_{\y|\h} = \frac{1}{2} \left( \Phi(\frac{\mu_{h,-1}'^\intercal\mu_h}{\| \mu \|}) - \Phi(\frac{\mu_{h,1}'^\intercal\mu_h}{\| \mu_h \|}\right) = \frac{\Phi(\|\mu_{h,-1}' \|) - \Phi(\| \mu_{h,1}'\| ) }{2}.
\end{equation}

\end{proof}

Now let's discuss the relative conditional shift on $\x$ and $\h$ when structure or feature deviates from training, respectively.
\begin{corollary} [GNNs exacerbate Conditional Shift]
Assuming only homophily ratio changes $p/q \neq p'/q'$, the conditional shift is always exacerbated by the 1-layer GCN since $\Delta_{\y|\x}=0$.
When there is only a feature shift $\delta \mu$, the shift will be amplified by the GCN as $\sqrt{D}\delta\mu$, potentially leading to larger conditional shifts.
\end{corollary}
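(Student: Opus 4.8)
The plan is to derive Corollary~\ref{cor:p-q} directly from the two closed-form expressions for $\Delta_{\y|\x}$ and $\Delta_{\y|\h}$ established in Theorem~\ref{thm_1}, specializing to the two pure shift regimes. Recall that by construction the feature shift parameter is $\delta$ and the structural parameters are $(p,q)$ on the source and $(p',q')$ on the target, with $D$, $D'$ the respective average degrees.

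\textbf{Case 1: pure structural shift ($\delta = 0$, $p/q \neq p'/q'$).} First I would substitute $\delta = 0$ into the first equation of \eqref{eq:gcn_shift}. Then $(1+\delta)\|\mu\| = (1-\delta)\|\mu\| = \|\mu\|$, so $\Delta_{\y|\x} = \tfrac{1}{2}\big(\Phi(\|\mu\|) - \Phi(\|\mu\|)\big) = 0$. This says a non-graph classifier experiences no conditional shift when only the graph structure changes, which makes sense because the input-space class means $(-\mu,\mu)$ are untouched. Next I would look at $\Delta_{\y|\h}$: with $\delta = 0$ we get $\mu_{h,1}' = \sqrt{D'}\tfrac{p'-q'}{p'+q'}\mu$ and $\mu_{h,-1}' = -\mu_{h,1}'$, so the two norms $\|\mu_{h,1}'\|$ and $\|\mu_{h,-1}'\|$ are actually equal and $\Delta_{\y|\h}$ would also naively be $0$ — so the statement must be interpreted in the sense that the \emph{optimal decision boundary} of the source GCN classifier $\f(\mu,0)$ differs from that of the target GCN, because the scaling factor $\tfrac{p-q}{p+q}$ versus $\tfrac{p'-q'}{p'+q'}$ changes the orientation/position of the separating hyperplane relative to the fixed source classifier. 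The honest argument here is: the source classifier is trained to be optimal for means $\pm\sqrt{D}\tfrac{p-q}{p+q}\mu$, but on the target the means are $\pm\sqrt{D'}\tfrac{p'-q'}{p'+q'}\mu$; since these differ by the ratio change, applying the source classifier's $\arg\max$ rule (which is the relevant $\P_s(\y|\h)$) versus the target's $\arg\max$ rule (i.e.\ $\P_t(\y|\h)$) yields a nonempty disagreement region — so $\Delta_{\y|\h} > 0 = \Delta_{\y|\x}$, establishing that the GCN strictly exacerbates the shift.

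\textbf{Case 2: pure feature shift ($\delta \neq 0$, $p'/q' = p/q$, $D' = D$).} Here I would read off directly from \eqref{eq:gcn_shift}: in the input space the relevant shift scale is $\delta\mu$ (the gap between the two $\Phi$ arguments is $2\delta\|\mu\|$), whereas $\mu_{h,1}' = \sqrt{D'}\tfrac{p-q}{p+q}\mu - \sqrt{D'}\delta\mu$, so the additive perturbation of the class mean caused by $\delta$ is now $\sqrt{D'}\,\delta\mu$ rather than $\delta\mu$. Since $D' > 1$ for any connected graph, $\sqrt{D'} > 1$, so the feature shift is magnified by a factor $\sqrt{D'}$ in the latent space. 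To complete the argument that this yields a larger $\Delta$, I would argue monotonicity: the conditional shift, viewed as a function of the displacement of the class mean away from the fixed source decision hyperplane, is increasing in the magnitude of that displacement (because $\Phi$ is monotone and the disagreement region grows), so amplifying the displacement from $\delta\mu$ to $\sqrt{D'}\delta\mu$ can only increase it.

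The main obstacle is conceptual rather than computational: making precise the phrase ``the conditional shift is always exacerbated'' when the raw formula $\Delta_{\y|\h}$ in Case~1 is symmetric and would evaluate to zero. The resolution I would pursue is to be careful that the quantity $\Delta_{\y|\h}$ as defined in \eqref{eq:cond_shift} compares $\arg\max_y \P_s(\y|\h)$ (the \emph{source} posterior, equivalently the source-trained classifier $\f$) against $\arg\max_y \P_t(\y|\h)$, so the asymmetry comes from freezing the source classifier; a clean way to present this is to note that the source GCN classifier has hyperplane orthogonal to $\mu$ passing through the origin, while the target-optimal GCN classifier has a hyperplane shifted by $\sqrt{D'}\delta\mu$ along $\mu$ (and/or reoriented when the ratio changes), and the measure of the target data lying between these two hyperplanes is exactly $\Delta_{\y|\h}$, which is strictly positive whenever the hyperplanes differ — hence strictly larger than $\Delta_{\y|\x}=0$ in the structural case, and larger by the monotonicity-in-$\sqrt{D'}$ argument in the feature case. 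Everything else is routine substitution into the already-proved Theorem~\ref{thm_1}.
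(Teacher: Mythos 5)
Your high-level route---specializing the closed forms from Theorem~3.1 to $\delta=0$ and to pure feature shift---is the same as the paper's, but you over-claim in both regimes, and the extra strength is not merely unproven: it is false.

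In Case~1 you correctly notice that the theorem's formula gives $\Delta_{\y|\h}=0$ when $\delta=0$ (since then $\mu_{h,1}'=-\mu_{h,-1}'$), and you then try to rescue a strict inequality by arguing that the source- and target-optimal hyperplanes differ. They do not. In both domains the class-conditional distributions of $\h$ are a pair of isotropic Gaussians of equal prior with midpoint at the origin; the optimal decision boundary (Bayes, and also the hinge-loss minimizer established in the appendix) is therefore the hyperplane through the origin orthogonal to $\mu$ in \emph{both} cases. Changing $p/q$ or $D$ rescales the class means along $\mu$ but leaves that hyperplane fixed, so $\arg\max_y\P_s(\y|\h)$ and $\arg\max_y\P_t(\y|\h)$ agree pointwise and the disagreement set is empty. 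The paper's own proof here asserts only $\Delta_{\y|\h}\geq\Delta_{\y|\x}=0$---a vacuous inequality---and does not, and cannot under the theorem's formula, assert strictness.

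In Case~2 your monotonicity argument silently holds the baseline class separation fixed while scaling only the displacement. But the GCN rescales both: $\mu\mapsto\sqrt{D'}\frac{p-q}{p+q}\mu$ and $\delta\mu\mapsto\sqrt{D'}\delta\mu$. Writing $c$ for the projected class-mean distance and $e$ for the projected displacement, the conditional shift equals half the standard-Gaussian mass of $[c-e,\,c+e]$; in the latent space this interval is both wider \emph{and} further from the origin, and the two effects compete. For instance with $\|\mu\|=1$, $\delta=0.5$, $D'=4$, $\frac{p-q}{p+q}=1$ one has $\Delta_{\y|\x}=\tfrac{1}{2}\bigl(\Phi(1.5)-\Phi(0.5)\bigr)\approx 0.12$ but $\Delta_{\y|\h}=\tfrac{1}{2}\bigl(\Phi(3)-\Phi(1)\bigr)\approx 0.08$, so the GCN \emph{decreases} the conditional shift. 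This is precisely why the paper says only that the feature perturbation is amplified by $\sqrt{D'}$, ``potentially leading to larger conditional shifts,'' and explicitly remarks that the closed-form comparison of $\Delta_{\y|\x}$ and $\Delta_{\y|\h}$ is complicated; its proof establishes monotonicity in $\delta$ \emph{within} each space, not a cross-space inequality. Your ``can only increase it'' is not a step the paper left to the reader---it is simply false.
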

\begin{proof}
When graph structure $(p^\prime, q^\prime)$ changes on target graph while $\mu$ remains the same (\ie $\delta=0$), $ \Delta_{\y|\h} \geq  \Delta_{\y|\x} =0 $

When there is a distribution shift in the feature mean of the class ($\delta > 0$), we define $\mu' = \sqrt{D'} \frac{p' - q'}{p' + q'}\mu$ and obtain the following expression:
\begin{equation}
    \Delta_{\x|\h} =\frac{\Phi(\|\mu+\delta\mu\|) - \Phi(\|\mu-\delta\mu\|)}{2},
    \Delta_{\y|\h} = \frac{\Phi(\|\mu'+\sqrt{D'}\delta\mu \|) - \Phi(\|\mu'-\sqrt{D'}\delta\mu\| ) }{2}.
\end{equation}
Although obtaining a closed-form solution for when GCNs exacerbate conditional shift, i.e., $\Delta_{\x|\h} < \Delta_{\y|\h}$, is complicated, we can analyze the effect of varying $\delta$ on $\x$ and $\h$. We observe that both $\Delta_{\x|\h}$ and $\Delta_{\y|\h}$ are monotonically increasing functions of $\delta$. In the latent space $\h$ \wrt $\mu'$, the magnitude of feature shift is amplified by $\sqrt{D'}$.

\end{proof}

\begin{corollary} [Relation between conditional shift and generalization]
Conditional shift upper bounds the performance gap between source and target, \ie $\Delta > |\varepsilon_\T-\varepsilon_\S|$.
The expected target error $\varepsilon_{\T}$ for linear classifiers $f$ and GNNs $f\circ g$ are,
\begin{equation}
    \varepsilon_{\T}(f)= 1 -  \frac{\Phi((1+\delta)\| \mu \| ) + \Phi((1-\delta)\| \mu \|)}{2}, \varepsilon_{\T}(f\circ g)=1-\frac{\Phi(\|\mu_{h,-1}' \|) + \Phi(\| \mu_{h,1}'\| ) }{2}.
\end{equation}
\end{corollary}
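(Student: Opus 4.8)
The plan is to leverage the machinery already assembled in the proof of Theorem~\ref{thm_1}. By Proposition (the one preceding the main proof), under hinge-loss training the optimal classifier on the source feature space is $\f(\mu,0)$, i.e.\ the hyperplane $\{\x : \mu^\T\x = 0\}$, and similarly on the GCN latent space it is characterized by the rescaled mean direction. The key observation is that the expected target error of a \emph{fixed} (source-optimal) classifier on the target distribution is, by definition of the $0$-$1$ loss, the total mass of each class's Gaussian that falls on the wrong side of that hyperplane. So the whole argument is just a second bookkeeping pass over the same picture used for $\Delta_{\y|\x}$ and $\Delta_{\y|\h}$, but now integrating ``misclassified mass'' rather than ``disagreement-between-optimal-classifiers mass.''

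First, for the linear classifier $f$: the target class-$+1$ samples are $\N((1-\delta)\mu, \I)$ and class-$-1$ samples are $\N(-(1+\delta)\mu,\I)$, and the source-optimal hyperplane passes through the origin with normal $\mu$. Projecting onto the $\mu$-direction and translating each Gaussian to standard form, the probability that a class-$+1$ point is misclassified is $\Phi(-(1-\delta)\|\mu\|) = 1-\Phi((1-\delta)\|\mu\|)$, and the probability a class-$-1$ point is misclassified is $1-\Phi((1+\delta)\|\mu\|)$. Averaging over the two balanced classes gives
\begin{equation}
\varepsilon_\T(f) = 1 - \frac{\Phi((1+\delta)\|\mu\|) + \Phi((1-\delta)\|\mu\|)}{2},
\end{equation}
exactly as claimed. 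The same computation with $\mu_{h,1}'$ and $\mu_{h,-1}'$ in place of $(1-\delta)\mu$ and $-(1+\delta)\mu$ (these are precisely the rescaled target class means in the latent space, already derived in the Theorem~\ref{thm_1} proof) yields the expression for $\varepsilon_\T(f\circ g)$.

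For the inequality $\Delta > |\varepsilon_\T - \varepsilon_\S|$, I would argue abstractly rather than through the Gaussian formulas. Write $\varepsilon_\S$ as the error of the source-optimal classifier on the source distribution and $\varepsilon_\T$ as its error on the target; since the $\arg\max$ (Bayes) classifier is optimal on its own domain, $\varepsilon_\S \le \varepsilon_\T$ would fail in general, so the bound must come from a coupling/triangle argument: the target error of the source classifier differs from the target error of the target-optimal classifier by at most the probability that the two classifiers disagree on the target, which is exactly $\Delta_{\y|\x}$ (resp.\ $\Delta_{\y|\h}$) by Eq.~\eqref{eq:cond_shift}; and the target error of the target-optimal classifier is at most $\varepsilon_\S$ up to the same disagreement term when one also accounts for the support-matching assumption. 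Concretely: $|\varepsilon_\T - \varepsilon_\S| \le \Pr_{\x\sim\P_t}[\f(\x)\neq\f'(\x)] + |\text{Bayes risk}_\T - \text{Bayes risk}_\S|$, and in the symmetric CSBM setup the Bayes-risk terms are controlled by the same $\Phi$ quantities, leaving $\Delta$ as the dominant bound. The main obstacle is making this last step fully rigorous: the clean inequality $\Delta > |\varepsilon_\T-\varepsilon_\S|$ is really a statement that the change in optimal decision region dominates the change in achievable risk, and one has to be careful that the ``$\delta$-shift moves both centroids in the same direction'' assumption is what prevents the Bayes-risk gap from exceeding the disagreement region — I would verify this by directly comparing the two $\Phi$-sum formulas above against the $\Phi$-difference formula for $\Delta$ and checking the inequality termwise, which reduces to monotonicity and concavity properties of $\Phi$ on the relevant interval.
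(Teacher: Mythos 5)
Your derivation of the two error formulas is correct and essentially the same as the paper's: project each target Gaussian onto the normal direction of the source-optimal hyperplane (which passes through the origin with normal $\mu$), read off the misclassified mass as $1-\Phi(\cdot)$, and average the two balanced classes. Nothing to add there.

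For the inequality $\Delta>|\varepsilon_\T-\varepsilon_\S|$, you propose an abstract route (triangle inequality through the target-optimal classifier $f'$) whereas the paper just does a direct termwise comparison of the $\Phi$-formulas. Your abstract route actually \emph{can} be made to work, but you stop one observation short of closing it: in this symmetric CSBM setup the Bayes risk is \emph{unchanged} between source and target. The target class means are $(1-\delta)\mu$ and $-(1+\delta)\mu$, so the target-optimal hyperplane passes through their midpoint $-\delta\mu$, and each class mean sits at distance $\|\mu\|$ from it --- exactly as in the source. Hence $\varepsilon_\T(f')=1-\Phi(\|\mu\|)=\varepsilon_\S(f)$, the Bayes-risk term in your bound is identically zero, and $|\varepsilon_\T(f)-\varepsilon_\S(f)|=|\varepsilon_\T(f)-\varepsilon_\T(f')|\le\Pr_{\x\sim\P_t}[f(\x)\neq f'(\x)]=\Delta_{\y|\x}$ by a union-bound/triangle argument on the $0$-$1$ loss. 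You gesture at ``controlled by the same $\Phi$ quantities'' but never nail down that they are \emph{equal}, which is the whole point; and you invoke concavity of $\Phi$, which is not actually needed. Also note this route only yields $\le$, not strict $>$; the strict version requires $\delta>0$. Your acknowledged fallback --- comparing the explicit formulas termwise --- is exactly what the paper does, and it is cleaner than you seem to expect: the $\Phi((1-\delta)\|\mu\|)$ terms cancel, leaving $\Delta_{\y|\x}-(\varepsilon_\T-\varepsilon_\S)=\Phi((1+\delta)\|\mu\|)-\Phi(\|\mu\|)>0$ (and symmetrically $\Delta_{\y|\x}-(\varepsilon_\S-\varepsilon_\T)=\Phi(\|\mu\|)-\Phi((1-\delta)\|\mu\|)>0$), which needs only monotonicity of $\Phi$, not concavity. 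The GCN case is then the same algebra with $\mu'_{h,\pm1}$ substituted for $(1\mp\delta)\mu$.
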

\begin{proof}
We begin by computing the expected target error $\varepsilon_{\T}$ on $\x$, denoted as $\varepsilon_{\T}(f)$. Unlike the calculation of conditional shift, the expected error is evaluated on the target graph $\mathcal{G}_\T$ and can be expressed as follows:
\begin{equation}
    \varepsilon_{\T}(f)  = \mathbb{E}_{\x \sim \P_t(\x)} \left(\mathbb{I}\left[\f(x)\neq y \right]\right),
\end{equation}
We recall that the feature means of the two classes are $(-(1+\delta)\mu, (1-\delta)\mu)$.
For class $1$, the optimal $f$ fails to classify $x_i$ correctly if $w^T x_i+b<0$, with a distance of $|(1-\delta)\mu|$ or more from $(1-\delta)\mu$. The probability of such instances can be calculated as $1-\Phi(|(1-\delta)\mu|)$. Combining this with class $-1$, we obtain the following result:
\begin{equation}
    \varepsilon_{\T}(f)=\underbrace{\frac{1-\Phi(\|\mu+\delta\mu\|)}{2}}_\text{error of class -1} + \underbrace{\frac{1-\Phi(\|\mu-\delta\mu\|)}{2}}_\text{error of class 1}
\end{equation}

Similarly, on a source graph, the expected error is $\varepsilon_{\S}(f) = 1 - \Phi(\mu)$ and $\Phi(\cdot)$ is a monotonically increasing function. Therefore, we have $|\varepsilon_{\S} - \varepsilon_{\T}| = \Phi(|\mu|) - \frac{\Phi(|\mu+\delta\mu|) + \Phi(|\mu-\delta\mu|)}{2}$. Furthermore, we can calculate $\Delta_{\y|\x} - |\varepsilon_{\S} - \varepsilon_{\T}|$ as follows:
\begin{equation}
    \Delta_{\y|\x} - |\varepsilon_{\S}-\varepsilon_{\T}| = \Phi(\|\mu+\delta\mu\|) - \Phi(\|\mu\|) > 0
\end{equation}

Regarding graph convolution networks, the class centroids after GCN are $\mu_{h,1}'$ and $\mu_{h,-1}'$ as calculated in Theorem 3.1. The expected error of a linear classifier $f$ on the output of GCN $g$ is obtained as follows:
\begin{equation}
    \varepsilon_{\T}(f\circ g)=1-\frac{\Phi(\|\mu_{h,-1}' \|) + \Phi(\| \mu_{h,1}'\| ) }{2}
\end{equation}

We re-use the definition of $\mu'$ from the proof of Corollary 3.1.1. We can now complet the proof:
\begin{equation}
    \Delta_{\h|\x} - |\varepsilon_{\S}-\varepsilon_{\T}|=\Phi(\|\mu'+\sqrt{D'}\delta\mu \| - \Phi(\|\mu'\|) > 0
\end{equation}
\end{proof}

\subsection{Proof of Theorem 4.1}

\begin{theorem}
Suppose $\mathcal{F}$ is the hypothesis space of GNNs, $\forall f \in \mathcal{F}$,
\begin{equation}
\varepsilon_\T(f) \leq \varepsilon_\S(f) + \W_1(\mu_\S^{f}, \mu_\T^{f}) + \lambda^* + K_\mathcal{L} K_g \phi(c),
\end{equation}
where $\lambda^*$ is the joint optimal error, $K_\mathcal{L}$ is the Lipschitz constant loss function of loss function $\mathcal{L}$, $K_g$ is the Lipschitz constant of GNN $g$ and $\phi(c)$ is the probabilistic lipschitzness~\cite{ben2014domain}.
\end{theorem}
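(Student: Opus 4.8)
The plan is to adapt the classical domain-adaptation bound of Ben-David et al.\ and its optimal-transport refinement in \cite{courty2017joint} to the graph setting, where the extra subtlety is that the feature map $g$ is not an arbitrary Lipschitz function but a GNN whose smoothness is only controlled \emph{probabilistically} (probabilistic Lipschitzness, $\phi(c)$). First I would decompose the target risk along a transport plan: let $\gamma^*\in\Gamma(\mu_\S,\mu_\T)$ be an optimal coupling for $\W_1(\mu_\S^{f},\mu_\T^{f})$, and write
\begin{equation}
\varepsilon_\T(f) = \mathbb{E}_{(x^s,x^t)\sim\gamma^*}\big[\mathcal{L}(f(g(x^t)),y(x^t))\big],
\end{equation}
then insert and subtract both the source label and the source prediction to get
\begin{equation}
\varepsilon_\T(f) \le \varepsilon_\S(f) + \underbrace{\mathbb{E}_{\gamma^*}\big|\mathcal{L}(f(g(x^s)),y^s) - \mathcal{L}(f(g(x^t)),\hat y^t)\big|}_{\text{(I)}} + \underbrace{\mathbb{E}_{\gamma^*}\big|\mathcal{L}(f(g(x^t)),\hat y^t)-\mathcal{L}(f(g(x^t)),y^t)\big|}_{\text{(II)}},
\end{equation}
where $\hat y^t = f(g(x^t))$ denotes the pseudo-label. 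Term (I) is exactly the transport cost $\W_1(\mu_\S^{f},\mu_\T^{f})$ once $\d(\cdot)=\mathcal{L}_{\mathrm{CE}}(\cdot)$ is plugged in and we use that $\gamma^*$ is optimal, and term (II) measures the disagreement between the model's prediction and the true label on the target, which is folded into the joint optimal error $\lambda^*$ together with the ideal-joint-hypothesis argument from Ben-David.

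The next step is to control the residual that arises because the coupling $\gamma^*$ realizing $\W_1$ in the \emph{representation/label} space $\mu^f$ need not transport source inputs to nearby target inputs in the raw input metric; conversely, a coupling cheap in input space may incur label changes. Here I would invoke $K_\mathcal{L}$-Lipschitzness of $\mathcal{L}$ to pass from differences of losses to differences of arguments $\|f(g(x^s))-f(g(x^t))\|$, and then $K_g$-Lipschitzness of the GNN $g$ (the spectral quantity studied in \cite{you2023graph,chuang2022tree}) to bound $\|g(x^s)-g(x^t)\| \le K_g\|x^s-x^t\|$ on the event where the probabilistic-Lipschitz condition holds; on the complementary event, of probability at most $\phi(c)$, the loss is bounded by its range and contributes the $K_\mathcal{L}K_g\phi(c)$ slack (absorbing a bounded-loss constant into the definition of $\phi$). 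Combining: (I) $\le \W_1(\mu_\S^{f},\mu_\T^{f}) + K_\mathcal{L}K_g\phi(c)$, and adding $\varepsilon_\S(f)+\lambda^*$ yields the claimed bound.

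The main obstacle, and the place where care is needed, is the bookkeeping between the three different "couplings" implicit in the argument: the optimal transport plan for $\W_1(\mu_\S^f,\mu_\T^f)$ in joint feature-label space, the plan that would be optimal in raw input space, and the ideal joint hypothesis hiding inside $\lambda^*$. One has to argue that a single coupling can be used throughout — the standard trick is to fix $\gamma^*$ optimal for the feature-label Wasserstein distance and then \emph{bound above} the input-space transport cost it induces using Lipschitzness in the reverse direction, i.e.\ $\W_1$ on $\mu^f$ is what appears in the bound while the Lipschitz constants convert the unavoidable mismatch into the $\phi(c)$ term. A secondary technical point is making the probabilistic-Lipschitzness definition of \cite{ben2014domain} line up with the event "$\|x^s-x^t\|\le c$" under $\gamma^*$, which requires either a mild boundedness/tail assumption on the input distributions or restricting $c$ appropriately; I would state this as a standing assumption rather than belabor it. Modulo that, everything else is routine triangle-inequality and Lipschitz-contraction estimates, so I would present steps (I)–(II) in full and relegate the $\lambda^*$ ideal-hypothesis manipulation to a citation of \cite{Ben-David2010,courty2017joint}.
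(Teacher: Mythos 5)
You've identified the right overall route — decompose along a coupling $\gamma^*$ that is optimal for $\W_1(\mu_\S^f,\mu_\T^f)$, then use Lipschitzness of the loss together with the probabilistic-transfer-Lipschitz condition to produce the $K_\mathcal{L}K_g\phi(c)$ slack — and this is indeed the architecture of the paper's proof, which follows \cite{courty2017joint}. Your closing remarks about the bookkeeping between the different implicit couplings and about aligning the $\phi(c)$ event with the transport plan are also pointing at the right technical concerns.

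However, the specific decomposition you wrote down is circular. You set $\hat y^t = f(g(x^t))$, so your term (II) is $\mathbb{E}_{\gamma^*}\big|\mathcal{L}(f(g(x^t)),\hat y^t)-\mathcal{L}(f(g(x^t)),y^t)\big|$; the first summand is $\mathcal{L}(\hat y^t,\hat y^t)$, which is zero (or a bounded entropy term) for any reasonable loss, and the second summand averaged against the target marginal of $\gamma^*$ is exactly $\varepsilon_\T(f)$. So (II) is essentially $\varepsilon_\T(f)$ itself and the resulting inequality is vacuous; your claim that (II) ``folds into $\lambda^*$'' would require $f$ to \emph{be} the ideal joint hypothesis, while the theorem is quantified $\forall f\in\mathcal{F}$. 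What the paper does instead is introduce a \emph{fixed} optimal joint hypothesis $f^*$, independent of the candidate $f$, at the very first step via the triangle inequality $\mathcal{L}(\y,f(\x))\le\mathcal{L}(\y,f^*(\x))+\mathcal{L}(f(\x),f^*(\x))$. This splits off $\varepsilon_\T(f^*)$; adding and subtracting $\varepsilon_\S(f^*)$ produces $\lambda^*=\varepsilon_\S(f^*)+\varepsilon_\T(f^*)$ and reduces the remaining work to bounding $|\varepsilon_\T^f(f^*)-\varepsilon_\S(f^*)|$ along the transport plan $\Gamma^*$, where the probabilistic transfer-Lipschitz condition is applied to $f^*$ (not to $g$): on the good event $|f^*(\x_s)-f^*(\x_t)|\le c\,d(\x_s,\x_t)$, and on the bad event of probability at most $\phi(c)$ the jump $|f^*(\x_s)-f^*(\x_t)|$ is bounded by the GNN Lipschitz constant $K_g$, giving the $K_\mathcal{L}K_g\phi(c)$ term. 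Finally, $\varepsilon_\S(f)$ is not manufactured by a source–target insertion as in your decomposition; the paper's proof actually establishes the tighter bound $\varepsilon_\T(f)\le\W_1(\mu_\S^f,\mu_\T^f)+\lambda^*+K_\mathcal{L}K_g\phi(c)$, and the stated theorem simply relaxes it by the nonnegative $\varepsilon_\S(f)$.
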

\begin{proof}
Following the approach in~\cite{courty2017joint}, we introduce $f^*$ as the optimal labeling function in the hypothesis space $\mathcal{F}$, giving us:
\begin{align}
\label{eq:thm4_1}
    \varepsilon_\T(f) &= \mathbb{E}_{(\x,\y) \sim \P_t}\mathcal{L}(\y, f(\x)) \nonumber \\
    &\leq \mathbb{E}_{(\x,\y) \sim \P_t}\mathcal{L}(\y, f^*(\x) + \mathcal{L}(f(\x), f^*(\x)) \nonumber \\
    &= \mathbb{E}_{(\x,\y) \sim \P_t}\mathcal{L}(f(\x), f^*(\x)) + \varepsilon_\T(f^*) \nonumber \\
    &=  \mathbb{E}_{(\x,f(\x)) \sim \P_t^f}\mathcal{L}(f(\x), f^*(\x)) + \varepsilon_\T(f^*) \nonumber \\
    &= \varepsilon_\T(f^*)-\varepsilon_\S(f^*)+\varepsilon_\S(f^*)+\varepsilon_\T(f^*) \nonumber \\
    &\leq |\varepsilon_\T^f(f^*) - \varepsilon_\S(f^*)|+\underbrace{\varepsilon_\S(f^*)+\varepsilon_\T(f^*)}_{\lambda^*}
\end{align}

Now we introduce the definition of $K_g$ and $\phi(c)$ in the theorem.
The Lipschitz constant of GNNs has garnered considerable attention in recent studies~\cite{chuang2022tree}. In our analysis, we view the data distribution as rooted subtrees~\cite{egi2020} centered around node $i$, denoted as $x_i = T_i$, where $T_i$ are sampled from graph $G$. We define the Lipschitz constant $K_g$ of GNNs as follows:
\begin{equation}
    |f(T_i)-f(T_j)| \leq K_g |l(T_i)-l(T_j)| \leq K_g
\end{equation}
where $l: T_i \rightarrow [0,1]^d$ is a bounded function maps node features in the rooted subtree to real values, \eg mean aggregation and normalization in GraphSAGE~\cite{hamilton2017inductive}.

\begin{definition}[Probabilistic Transfer Lipschitzness~\cite{courty2017joint}] Let $\phi:\mathbb{R}\rightarrow[0,1]$, a labeling function $f:X\rightarrow\mathbb{R}$ and a joint distribution $\Gamma$ over $\mu_\S$ and $\mu_\T$, the $\phi$-transfer lipschitzness represents for all $c$:
\begin{equation}
    \P_{(\x_s,\x_t)}[|f(\x_s)-f(\x_t)|>c d(\x_s,\x_t)] \leq \phi(c)
\end{equation}
\end{definition}

Let $\mu_\S^f = \P_s(\x,\y)$ and $\mu_\T^f = \P_t^f(\x,f(\x))$ denote the source data distribution and the estimated target data distribution, respectively.
$\varepsilon_\T^f(f^*)$  can be interpreted as the discrepancy in predictions between $f$ and $f^*$. Given $\Gamma^*$ is the optimal transportation plan of \Ours, we have:
\begin{align}
&|\varepsilon_\T^f(f^*) - \varepsilon_\S(f^*)| \nonumber \\
&= \Big\rvert \int \mathcal{L}(y,f^*(\x))\d(\P_t^f-\P_s)\Big\rvert \nonumber \\
&= \Big\rvert \int \left(\mathcal{L}(f(\x_t),f^*(\x_t)) - \mathcal{L}(\y_s,f^*(\x_s)) \right)\d\Gamma^*((\x_s,\y_s),(\x_t,f(\x_t)))\Big\rvert \nonumber \\
&\leq \int \Big\rvert\left(\mathcal{L}(f(\x_t),f^*(\x_t))- \mathcal{L}(\y_s,f^*(\x_s)) \right)\Big\rvert\d\Gamma^*((\x_s,\y_s),(\x_t,f(\x_t))) \nonumber \\
&\leq \int \Big\rvert\mathcal{L}(f(\x_t),f^*(\x_t)) - \mathcal{L}(f(\x_t),f^*(\x_s))\Big\rvert + \Big\rvert \mathcal{L}(f(\x_t),f^*(\x_s)) - \mathcal{L}(\y_s,f^*(\x_s)) \Big\rvert\d\Gamma^*((\x_s,\y_s),(\x_t,f(\x_t))) \nonumber \\
&\leq \int K_\mathcal{L} \Big\rvert f^*(\x_s) - f^*(\x_t) \Big\rvert + \mathcal{L}(f(\x_t),\y_s) \d\Gamma^*((\x_s,\y_s),(\x_t,f(\x_t)))  \\
&\leq \int c*K_\mathcal{L}d(\x_s,\x_t) +\mathcal{L}(y_s, f(\x_t)) \d\Gamma^*((\x_s,\y_s),(\x_t,f(\x_t))) + K_\mathcal{L} K_g\phi(c) \\
&\leq \W_1(\mu_\S^f, \mu_\T^f) + K_\mathcal{L} K_g\phi(c) 
\end{align}
Line (25) is a consequence of Lipschitz constant and triangle inequality on $\mathcal{L}$. Line (26) applies $\phi(c)$-transfer lipschitzness on $f^*(\x)$.
The last line (27) is achieved by setting $\alpha=c*K_\mathcal{L}$ in Eq. (11) of the main paper.
We complete the proof by combining Eq.~\eqref{eq:thm4_1} and Eq. (27).
\end{proof}

\subsection{Additional discussion on DIRL}
In Section 3.1 of the main paper, we introduced the covariate shift assumption on DIRL, which alternatively assumes a small conditional shift. However, even with this assumption, our synthetic experiment in Section 5 shows that the best DIRL method (\ie CMD) still yield unsatisfactory results. To further illustrate this from a theoretical perspective, we restate an existing study on the conditional shift in DIRL.

\begin{theorem} [Limits of learning invariant representations under conditional shift]~\cite{zhao2019learning}
Suppose markov chain $X\xrightarrow{\g} Z \xrightarrow{h}\hat{Y}$ and $d_\textnormal{JS}$ is the  Jensen-Shannon distance,
$$\varepsilon_\S(h\circ\g) + \varepsilon_\T(h\circ\g) \geq \frac{1}{2} \left( d_\textnormal{JS}(\D^Y_\S, \D^Y_\T) - d_\textnormal{JS}(\D^Z_\S, \D^Z_\T)^2 \right)$$
\end{theorem}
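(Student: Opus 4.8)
The plan is to reproduce the information-theoretic lower bound of \cite{zhao2019learning}, since the statement is quoted verbatim rather than proved from scratch. The key tool is that the Jensen–Shannon distance $d_\textnormal{JS}(\cdot,\cdot)$ (the square root of the Jensen–Shannon divergence) is a bona fide metric on the space of probability distributions, so it satisfies the triangle inequality. First I would fix the Markov chain $X \xrightarrow{\g} Z \xrightarrow{h} \hat{Y}$ and let $\D^Z_\S, \D^Z_\T$ be the push-forwards of the source/target distributions under $\g$, and $\D^{\hat Y}_\S, \D^{\hat Y}_\T$ the further push-forwards under $h$. Because $\hat Y$ is a (possibly stochastic) function of $Z$ alone, the data-processing inequality for $f$-divergences gives $d_\textnormal{JS}(\D^{\hat Y}_\S, \D^{\hat Y}_\T) \le d_\textnormal{JS}(\D^Z_\S, \D^Z_\T)$; this is the step that injects the representation-alignment term into the bound.

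Next I would apply the triangle inequality to the chain of distributions on the label space:
\begin{equation}
d_\textnormal{JS}(\D^Y_\S, \D^Y_\T) \le d_\textnormal{JS}(\D^Y_\S, \D^{\hat Y}_\S) + d_\textnormal{JS}(\D^{\hat Y}_\S, \D^{\hat Y}_\T) + d_\textnormal{JS}(\D^{\hat Y}_\T, \D^Y_\T).
\end{equation}
The middle term is controlled by the representation discrepancy as above. For the two outer terms I would relate the JS distance between the true and predicted label marginals within a single domain to the classification error in that domain: a standard estimate (via Pinsker-type inequalities, or directly bounding total variation by the probability of a mistake and then $d_\textnormal{JS} \le \sqrt{d_\textnormal{TV}}$ up to constants) yields $d_\textnormal{JS}(\D^Y_\S, \D^{\hat Y}_\S) \le \sqrt{\varepsilon_\S(h\circ\g)}$ and likewise for the target, where $\varepsilon$ is the $0/1$-style risk. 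Substituting and rearranging — moving the alignment term to the left, squaring, and using $(a+b)^2 \le 2(a^2+b^2)$ together with $a^2 + b^2 \le a + b$ for $a,b\in[0,1]$ — produces the claimed inequality
\begin{equation}
\varepsilon_\S(h\circ\g) + \varepsilon_\T(h\circ\g) \ge \tfrac{1}{2}\left( d_\textnormal{JS}(\D^Y_\S, \D^Y_\T) - d_\textnormal{JS}(\D^Z_\S, \D^Z_\T)^2 \right).
\end{equation}

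The main obstacle is the last step: tightly tracking the constants so that exactly the factor $\tfrac12$ and exactly the squared discrepancy term $d_\textnormal{JS}(\D^Z_\S,\D^Z_\T)^2$ emerge, rather than some looser variant. This requires being careful about whether one works with the JS distance or its square (the divergence), and about the precise way the per-domain error dominates the true-vs-predicted label-marginal distance; the cleanest route is to follow the algebraic manipulation of \cite{zhao2019learning} line by line. Since this theorem is cited as prior work and only restated here for context, I would in the actual writeup simply refer the reader to \cite{zhao2019learning} for the full argument and include the sketch above only to make the appendix self-contained.
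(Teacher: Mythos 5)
The paper offers no proof of this result: it is Theorem 4.3 of~\cite{zhao2019learning}, quoted in Appendix A.3 solely to motivate why DIRL fails under conditional shift, and the reader is referred to the original source. So there is nothing in the paper to compare your argument against; the question is only whether your sketch faithfully reconstructs the cited proof. On the whole it does. You correctly identify the three ingredients: (i) $d_\textnormal{JS}$ is a metric, so the triangle inequality applies across the chain $\D^Y_\S \to \D^{\hat Y}_\S \to \D^{\hat Y}_\T \to \D^Y_\T$; (ii) the data-processing inequality collapses the middle term to $d_\textnormal{JS}(\D^Z_\S,\D^Z_\T)$ because $\hat Y$ is a function of $Z$; and (iii) a per-domain lemma (Lemma 4.6 in~\cite{zhao2019learning}) bounds $d_\textnormal{JS}(\D^Y,\D^{\hat Y}) \le \sqrt{\varepsilon(h\circ\g)}$ by controlling total variation with the misclassification probability. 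That is exactly Zhao et al.'s route.

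Where your sketch goes wrong is the final algebraic step, and the culprit is a typo in the statement as transcribed here: the exponent belongs \emph{outside} the parenthesized difference. Zhao et al.\ prove
\begin{equation}
\varepsilon_\S(h\circ\g) + \varepsilon_\T(h\circ\g) \;\ge\; \tfrac{1}{2}\bigl( d_\textnormal{JS}(\D^Y_\S,\D^Y_\T) - d_\textnormal{JS}(\D^Z_\S,\D^Z_\T) \bigr)^{2},
\end{equation}
which follows cleanly from rearranging the triangle inequality to $\sqrt{\varepsilon_\S}+\sqrt{\varepsilon_\T} \ge d_\textnormal{JS}(\D^Y_\S,\D^Y_\T)-d_\textnormal{JS}(\D^Z_\S,\D^Z_\T)$, squaring (assuming the RHS is nonnegative, else the bound is vacuous), and applying $(a+b)^2 \le 2(a^2+b^2)$ with $a=\sqrt{\varepsilon_\S}$, $b=\sqrt{\varepsilon_\T}$. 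Your additional invocation of ``$a^2+b^2 \le a+b$ for $a,b\in[0,1]$'' is both unnecessary and pointed in the wrong direction --- it \emph{upper} bounds $\varepsilon_\S+\varepsilon_\T$, whereas you need a lower bound --- and it does not in fact produce the displayed expression $d_\textnormal{JS}(\D^Y_\S,\D^Y_\T) - d_\textnormal{JS}(\D^Z_\S,\D^Z_\T)^2$ (square on the second term only). Rather than force the derivation toward the misprinted form, you should flag that the square applies to the entire difference and reproduce the standard bound; as written, your last step would not compile into a valid argument.
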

According to the above theorem, when $\P(Y|X)$ is different on source and target, minimizing source risk and  ${\mathcal{H}\Delta\mathcal{H}}$-divergence leads to a small JS distance $d_\textnormal{JS}(\D^Z_\S, \D^Z_\T)$. As a consequence, the marginal label shift $d_\textnormal{JS}(\D^Y_\S, \D^Y_\T)$ dominating the 
the lower bound of joint source and target risk. If conditional shift is large, DIRL cannot achieve accurate predictions on target. In Figure~\ref{fig:DANN_results}, we train a domain adversarial neural network~\cite{ganin2016domain} and project the node TSNE embeddings of source and target CSBM graphs. Two different colors indicate class labels, \textbf{O} dots are source data and \textbf{X} are target samples.
When the conditional shift is small and covariate shift assumption holds approximately, DANN can separate different classes well for both source and target domains (left). However, when there is large  conditional shift, the classification accuracy on target is low because it only minimizes discrepancy between representations, and classes end up intermixed.

\begin{figure}
  \begin{subfigure}{0.48\textwidth}
    \includegraphics[width=1\textwidth]{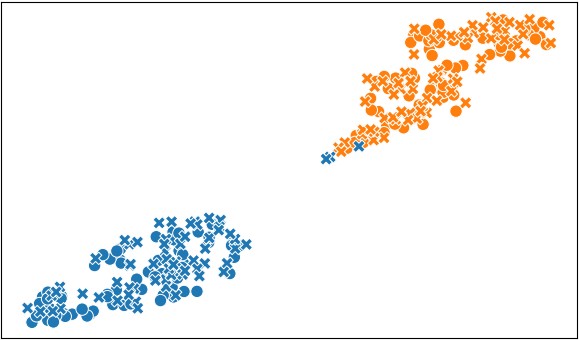}
    \caption{$\P_s(Y|X)\approx \P_t(Y|X)$}
  \end{subfigure}
\begin{subfigure}{0.48\textwidth}
    \includegraphics[width=1\textwidth]{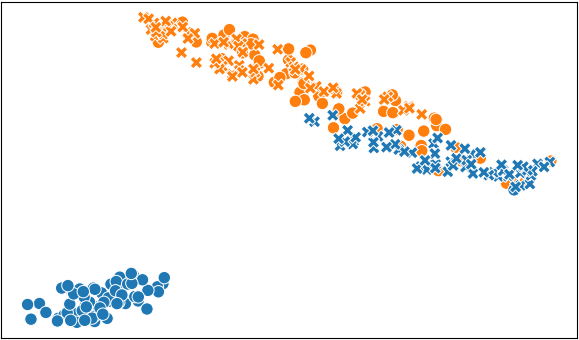}
  \caption{$\P_s(Y|X) \neq \P_t(Y|X)$}
\end{subfigure}
\caption{Performance of DIRL methods under small or large conditional shifts.}
\label{fig:DANN_results}
\end{figure}

\section{Model Details}
\label{supp:model}
\subsection{\Ours Algorithm}
\begin{algorithm}[H]
\textbf{Input:} Training graph $\G_\S$; testing graph $\G_\T$; \\ 
Graph Sampler \textbf{\texttt{SAMPLE}};\\
\textbf{Output:} GNNs $g$ and classifier $f$ with trained weights;\\
\For{\text{each batch of} $(\G^s_b, x^s_b, y^s_b)$ and ($\G^t_b, x^t_b)$ from \textbf{\texttt{SAMPLE}}}{
\text{fix $f,g$, compute $\d(\cdot)$ } $\leftarrow$ Eq. (10) of the main paper  \\
 \text{solve $\Gamma^*$ using an OT solver} \\
 fix $\Gamma^*$ and update the weights of $f,g$ $\leftarrow$ Eq. (9) of the main paper
 }
 \caption{Pseudo code for \Ours optimization}
 \label{alg}
\end{algorithm}
In the algorithm, we use node classification with a neighborhood sampler as an example. For graph classification, each sample $(\G_b, x_b, y_b)$ is a different graph sampled from source or target. 
\subsection{Implementations}
We implement our method and all other baselines using torch-geometric library. We list the graph neural network specifications used in our experiments,
\begin{enumerate}
    \item Synthetic node classification - model architecture: Graph Convolutional Networks~\cite{kipf2016semi}, hidden dimension: 16, activation: SiLU, number of layers: 2, dropout: 0.0
    \item Semi-supervised node classification - model architecture: APPNP~\cite{klicpera2018predict}, hidden dimension: 32, number of layers:2, dropout: 0.0, 
    \item Supervised node classification - model architecture: Graph Convolutional Networks~\cite{kipf2016semi}, hidden dimension: 128, activation: ReLU, number of layers: 2, dropout: 0.2
    \item Supervised graph classification - model architecture: GraphSAGE~\cite{hamilton2017inductive}, hidden dimension: 300, activation: ReLU, number of layers: 5, dropout: 0.5
\end{enumerate}
For supervised node classification, we utilized the RandomWalk GraphSAINT~\cite{zeng2019graphsaint} sampler with a batch size of 256, step size of 50, and walk length of 2. We indepdentently run experiments 10 times and report the mean and standard deviation in all table and figures.
All models are trained on a single Nvidia A6000 GPU.
The code for each experiment can be found in separate folder in supplementary materials.

\subsection{Baseline Hyperparameters}
In our experiments, we employed the following baselines and performed hyperparameter tuning on the validation set. Specifically, each baseline has hyperparameters as follow,
\begin{enumerate}
    \item For MMD, $\alpha \in \{0.01, 0.1, 0.5, 1\}$ controls the weight of regularization.
    \item For CMD, $k \in \{1, 3, 5, 7, 10\}$ determines the number of central moment. $\alpha \in \{0.01, 0.1, 0.5, 1\}$ controls the weight of regularization.
    \item For DANN, $\alpha$ is set in $\{0.1, 0.5, 1\}$ for reverse gradients in backward pass. $\beta \in \{0.01, 0.1, 0.5, 1\}$ controls the weight of regularization.
    \item For CDAN, $\lambda$ is a hyper-parameter between source classifier and conditional domain discriminator. $lo \in \{0.01, 0.1, 1\}$ and $h_i \in \{0.1, 1, 2\}$ are the initial value and final value of $\lambda$. $\beta \in \{0.01, 0.1, 0.5, 1\}$ controls the weight of regularization.
    \item For UDAGCN, the balance parameters $\gamma_1$ and $\gamma_2$ are adjusted carefully in the searching space $\{0.1, 0.3, 0.5, 0.7, 1.0\}$, respectively. The adaptation rate $\lambda$ is the following schedule: $\lambda=\min (\frac{2}{1+\exp (-10 p)}-1,0.1)$, and the $p$ is changing from 0 to 1 within the training process as \cite{udagcn2020}. 
    \item For EERM, we search the best learning rate $\alpha_f \in\{0.0001,0.0002,0.001,0.005,0.01\}$ for GNN backbone, the learning rate $\alpha_g \in\{0.0001,0.001,0.005,0.01\}$ for graph editers, the weight $\beta \in\{0.2,0.5,1.0,2.0,3.0\}$ for combination, the number of edge editing for each node $s \in\{1,5,10\}$, the number of iterations $T \in\{1,5\}$ for inner update before one-step outer update. 
    \item For SRGNN-IW$^{\dag}$, the main hyper parameters in the sampler PPR-S are $\alpha \in \{0.01, 0.1, 0.5, 1\}, \gamma \in \{10, 50, 100, 200, 500\}$. When the graph is large, $\epsilon=0.001$ is set in the local algorithm for sparse PPR approximation. $\lambda\in \{0.1, 0.5, 1, 2\}$ is the penalty parameter for the discrepancy regularizer. The lower bound for the instance weight $B_l$ is in $\{0.1, 0.2, 0.5, 1.0\}$.
    \item Hyperparameters of \Ours $\alpha$ and $\beta$ are selected between $\{0.01, 0.1, 1\}$.
    
\end{enumerate}

\section{Experiment Details}
\label{supp:exp}

\begin{table*}[t]
\caption{Dataset Statistics.}
\label{tab:data-stats}
\centering
\scalebox{0.7}{
\begin{tabular}{l c c c c c c c c c c c c c}
\toprule
& \texttt{syn-csbm} & \texttt{syn-cora} & \texttt{syn-products} & \texttt{cora} & \texttt{citeseer} & \texttt{pubmed} & \texttt{DBLP} & \texttt{ACM}  & \texttt{BACE} & \texttt{BBBP} & \texttt{Clintox} 

 \\ \midrule
\# Graphs & 500 & 30 & 30 & 1 & 1 & 1 & 2 & 2 & 1513 & 2039 & 1478\\
\# Nodes & 128 & 1,490 & 10,000 & 2,708 & 3,327 & 19,717 & 78,509 & 23,343 & 34 & 24 & 26\\
\# Edges & 1,280 & 2,965 & 59,640 & 5,278 & 4,614 &44,325 & 1,001,300 & 162,106 & 74 & 52 & 56\\
\# Classes & 2 & 5 & 10 & 7 & 6 & 3 & 5 & 5 & 2 & 2 & 2\\
\bottomrule
\end{tabular}
}
\end{table*}

\subsection{Dataset Details}
In the main paper, we perform node classification and graph classification tasks on 11 different datasets with distribution shift. The statistics of these graphs are presented in Table \ref{tab:data-stats}. We will now discuss the selection criterion or creation process for each dataset in detail.
\begin{figure}
    \centering
    \includegraphics[width=0.4\linewidth]{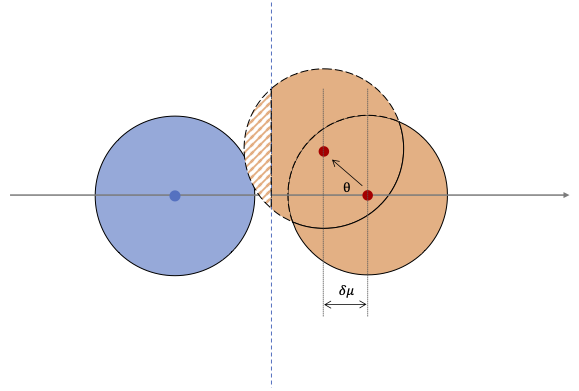}
    \caption{Illustration of creating feature shift on CSBM graphs.}
    \label{fig:creat_figure_shift}
\end{figure}

\xhdr{CSBM Dataset Generation.}
In our experiments, we set the feature size $d$ and average degree $D$ of  CSBM graph in Definition 3.3 graph as 128 and 10, respectively.

For structure shift (\ie \texttt{syn-csbm-pq} ), each time we first sample a feature mean $\mu \sim \mathcal{N}(0, \frac{1}{\sqrt{d}})$, where $d$ is the dimension of the feature.
Then source graph is generated with a fixed $p/q=5$ while each target graph is generated under a random $p/q$ between $\{1,...,10\}$. Such that we ensure the features of both graph are generated with the same Gaussian distribution and their homophily ratios are different.

For feature shift (\ie \texttt{syn-csbm-$\delta$}), we generate $\mu'$ by translating mean by $\delta \mu$ and rotate $\mu'$ by $\theta$ (from 0 to 60 degrees). In corollary 3.1.2, we use the same $\delta$ to describe the classification error. When $\delta$ is small, feature shift is small and test feature mean $\mu'$ is close to original feature mean. The rotation is added to avoid trivial adaptation like translation. Figure~\ref{fig:creat_figure_shift} illustrates the process of creating features shifts in our experiment.
The dataset generation code can be found in uploaded code named cSBM\_gendata.py.

\xhdr{DBLP-ACM Dataset.}
In the main paper, we conduct the transfer learning experiments with \emph{domain shift} and \emph{time shift} for node classification. These experiments use three sets of citation networks, which are constructed on the datasets provided by ArnetMiner \cite{tang2008arnetminer}. 
Specifically, for domain shift, we adopt two sets of ACM-DBLP citation networks of different sizes. 
The small set namely ACM-DBLP$_\text{small}$ is proposed by \cite{udagcn2020}. 
It includes the papers extracted from ACMv9 (between years 2000 and 2010) and DBLPv8  (after year 2010). 
The large set, ACM-DBLP$_\text{large}$ is constructed on DBLPv12 (before 2017) and ACMv8 (before 2017). 
As to time shift, we utilize ACMv9 across different time periods, specifically, before or after 2010, to build two citation networks, ACM$_\text{time}$. 
In our experiments, we consider these datasets as undirected graphs and each edge representing a citation relation between two papers. 
The papers are classified to some of the predefined categories according to its research topics. 
ACM-DBLP$_\text{small}$ has six categories including“Database”, “Data mining”, “Artificial intelligent”, “Computer vision”, “Information Security” and "High Performance Computing".  
For ACM-DBLP$_\text{large}$ and ACM$_\text{time}$, there are five categories including “Database”, “Data mining”, “Artificial intelligent”, “Computer vision”, and “Natural Language Processing".  
We evaluate our proposed methods by conducting multi-label classification on these three sets of citation networks.

\xhdr{Graph Classification Datasets.} There are 10 molecular propety prediction datasets from Open Graph Benchmark~\cite{hu2020open}. These graphs are known to be affected by the scaffold split of the training and testing data. To compare different domain adaptation algorithms, we rank the performance degradation by comparing validation and test accuracy. From Table 2 in the main paper, we select the top-3 datasets with the highest degradation: BACE, BBBP, and Clintox. We choose these datasets because they exhibit the most pronounced "negative" distribution shifts.

\subsection{Complementary Results on Synthetic Domain Adaptation}
In Figure~\ref{supp:csbm_adaptation}, we provide the test logloss plot of our experiments on CSBM graphs as complimentary results of Figure 3 of the main paper, respectively. The test loss also correlates well with domain adaptation bound introduced in Theorem 4.1. When distribution shift becomes more significant, for example a smaller p/q or larger $\delta$, the target loss increases.
In addition, we present the numerical results used to draw Figure 3a and 3b of the main paper in Table~\ref{tab:syn_csbm_p_q} and Table~\ref{tab:syn_csbm_delta}.
\begin{figure}[h]
\centering

\begin{subfigure}{0.45\linewidth}
\includegraphics[width=0.9\textwidth]{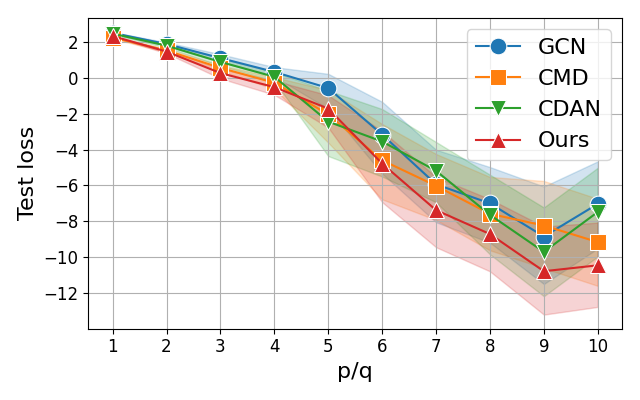}
\centering
\caption{Testing loss of different DA algorithms.}
\end{subfigure}
\hspace*{\fill}
\begin{subfigure}{0.45\linewidth}
\includegraphics[width=0.9\textwidth]{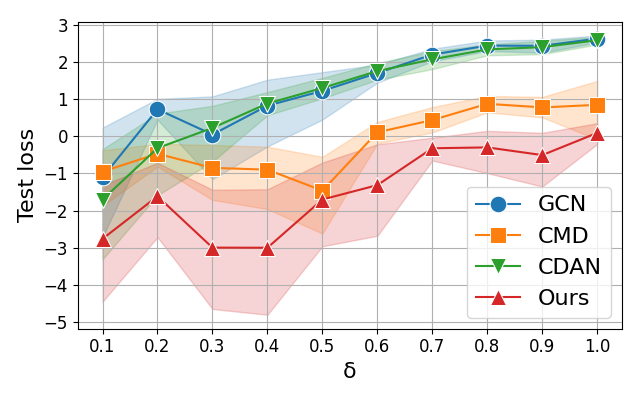}
\centering
\caption{Testing Loss of different DA algorithms.}
\end{subfigure}

\caption{Domain adaptation on datasets constructed from real graphs. We use homophily ratio $1.0$ for training and plot the base GCN performance as well as domain adaption algorithms on three test graphs per interval.}
\label{supp:csbm_adaptation}
\end{figure}

\begin{table*}[t]
\caption{syn-csbm-p/q (Fig. 3a). Mean ROC and standard deviation per method (with structure shift $p/q$).
}
\centering
\scalebox{0.75}{
\begin{tabular}{l c c c c c c c c c c}
\toprule

{\multirow{2}{*}{Method}} & \multicolumn{10}{c}{\texttt{syn-csbm-$p/q$}}  \\                        & 1 & 2 & 3 & 4 & 5 & 6 & 7 & 8 & 9 & 10    \\
 \midrule
GCN  &   62.6 \scriptsize{$\pm$ 6.3} &  78.7 \scriptsize{$\pm$ 8.2}  &    87.9 \scriptsize{$\pm$ 8.8}   & 93.1 \scriptsize{$\pm$ 8.1}    & 94.9 \scriptsize{$\pm$ 5.7} &    97.1 \scriptsize{$\pm$ 3.6}  &  97.6 \scriptsize{$\pm$ 4.4}   &   98.8 \scriptsize{$\pm$ 1.7}    &   98.4 \scriptsize{$\pm$ 3.0} &  97.8 \scriptsize{$\pm$ 4.6}               \\
CMD &    66.0 \scriptsize{$\pm$ 5.0} &  83.7 \scriptsize{$\pm$ 4.0}  &    93.1 \scriptsize{$\pm$ 3.3}   & 96.2 \scriptsize{$\pm$ 2.6}    & 97.9 \scriptsize{$\pm$ 1.5} &    98.5 \scriptsize{$\pm$ 1.4}  &  98.8 \scriptsize{$\pm$ 1.4}   &   99.1 \scriptsize{$\pm$ 1.2}    &   99.3 \scriptsize{$\pm$ 0.9} &  99.3 \scriptsize{$\pm$ 1.0}   \\
CDAN  &  62.7 \scriptsize{$\pm$ 5.9} &  79.2 \scriptsize{$\pm$ 8.1}  &    90.0 \scriptsize{$\pm$ 6.5}   & 94.6 \scriptsize{$\pm$ 5.4}    & 96.0 \scriptsize{$\pm$ 4.7} &    97.9 \scriptsize{$\pm$ 2.0}  &  98.4 \scriptsize{$\pm$ 3.6}   &   99.1 \scriptsize{$\pm$ 1.1}    &   99.1 \scriptsize{$\pm$ 1.4} &  98.6 \scriptsize{$\pm$ 2.7}             \\
Ours  &  68.1 \scriptsize{$\pm$ 5.4} &  85.9 \scriptsize{$\pm$ 4.0}  &    94.7 \scriptsize{$\pm$ 3.0}   & 96.9 \scriptsize{$\pm$ 2.1}    & 98.4 \scriptsize{$\pm$ 1.3} &    98.9 \scriptsize{$\pm$ 1.0}  &  99.4 \scriptsize{$\pm$ 0.7}   &   99.5 \scriptsize{$\pm$ 0.5}    &   99.7 \scriptsize{$\pm$ 0.5} &  99.6 \scriptsize{$\pm$ 0.5}  \\

\bottomrule
\end{tabular}
}
\label{tab:syn_csbm_p_q}
\end{table*}

\begin{table*}[t]
\caption{syn-csbm-$\delta$ (Fig. 3b). Mean ROC and standard deviation per method (with feature shift $\delta$).
}
\centering
\scalebox{0.7}{
\begin{tabular}{l c c c c c c c c c c}
\toprule
{\multirow{2}{*}{Method}} & \multicolumn{10}{c}{\texttt{syn-csbm-$\delta$}}  \\                        & 0.1 & 0.2 & 0.3 & 0.4 & 0.5 & 0.6 & 0.7 & 0.8 & 0.9 & 1.0    \\
 \midrule
GCN      &   91.5 \scriptsize{$\pm$ 10.7}    &  89.6 \scriptsize{$\pm$ 9.9} & 87.9 \scriptsize{$\pm$ 11.2} &   82.9 \scriptsize{$\pm$ 14.3} & 80.2 \scriptsize{$\pm$ 13.8} &   78.1 \scriptsize{$\pm$ 13.6} & 69.1 \scriptsize{$\pm$ 12.8} &   61.8 \scriptsize{$\pm$ 13.1} & 61.1 \scriptsize{$\pm$ 13.4} &   56.8 \scriptsize{$\pm$ 10.2}   \\
CMD     &   97.5 \scriptsize{$\pm$ 1.9} &  97.0 \scriptsize{$\pm$ 1.8}  &    96.9 \scriptsize{$\pm$ 2.6}   & 97.1 \scriptsize{$\pm$ 2.5}    & 96.2 \scriptsize{$\pm$ 5.1} &    94.2 \scriptsize{$\pm$ 5.5}  &  89.5 \scriptsize{$\pm$ 18.5}  &   87.3 \scriptsize{$\pm$ 15.5}  &    87.2 \scriptsize{$\pm$ 19.2}  & 80.1 \scriptsize{$\pm$ 20.2}  \\
CDAN    &   93.5 \scriptsize{$\pm$ 7.5} &  90.2 \scriptsize{$\pm$ 9.3}  &    87.1 \scriptsize{$\pm$ 11.4}  & 84.4 \scriptsize{$\pm$ 12.6}  &  79.0 \scriptsize{$\pm$ 13.5}  &   72.6 \scriptsize{$\pm$ 12.4}  &    66.6 \scriptsize{$\pm$ 12.1}  & 60.8 \scriptsize{$\pm$ 13.0}  &  59.4 \scriptsize{$\pm$ 12.2}  &   55.5 \scriptsize{$\pm$ 8.0}          \\
Ours  &  98.1 \scriptsize{$\pm$ 1.5} &  97.8 \scriptsize{$\pm$ 1.5}  &    98.0 \scriptsize{$\pm$ 1.8}   & 98.1 \scriptsize{$\pm$ 1.5}    & 97.4 \scriptsize{$\pm$ 4.1} &    96.9 \scriptsize{$\pm$ 1.8}  &  96.3 \scriptsize{$\pm$ 2.4}   &   95.3 \scriptsize{$\pm$ 3.3}    &   95.2 \scriptsize{$\pm$ 3.9} &  94.0 \scriptsize{$\pm$ 5.4}  \\
\bottomrule
\end{tabular}}
\label{tab:syn_csbm_delta}
\end{table*}

\subsection{Complementary Results on Supervised Node Classification}
Due to the space limit, we only report the Micro-F1 in the Table 2 of the main paper. In Table~\ref{tab:ood_test_classification-result}, we include the results on both Micro-F1 and Macro-F1.

\begin{table*}[t]
\caption{Full result of supervised node classification. We report mean and standard deviation on Micro and Macro F1.
}
\label{tab:ood_test_classification-result}
\centering
\scalebox{1}{
\begin{tabular}{ll|c c c c c c}
\toprule
\multicolumn{2}{c }{\multirow{2}{*}{Method}}                             & \multicolumn{2}{c}{$\text{ACM-DBLP}_\text{small}$}            & \multicolumn{2}{c}{$\text{ACM}_\text{time}$} & \multicolumn{2}{c}{$\text{ACM-DBLP}_\text{large}$}     \\
\multicolumn{2}{c}{}                                                    & Micro-F1 & Macro-F1  & Micro-F1 & Macro-F1 & Micro-F1 & Macro-F1   \\ \midrule
\multicolumn{2}{l}{Base model}   &  68.1 \scriptsize{$\pm$ 2.1} &    68.2 \scriptsize{$\pm$ 2.4}	 &  78.8 \scriptsize{$\pm$ 1.0} &    76.1 \scriptsize{$\pm$ 0.7}	 &  81.1 \scriptsize{$\pm$ 0.2} &    79.1 \scriptsize{$\pm$ 0.2}                \\
\multicolumn{2}{l}{MMD} &   65.9 \scriptsize{$\pm$ 2.2}	&   65.3 \scriptsize{$\pm$ 3.1}		&   79.0 \scriptsize{$\pm$ 1.0}	&   76.1 \scriptsize{$\pm$ 1.0}		&   81.7 \scriptsize{$\pm$ 0.3}	&   79.6 \scriptsize{$\pm$ 0.3}   \\
\multicolumn{2}{l}{CMD$^{\dag}$}        &   75.5 \scriptsize{$\pm$ 4.4}	    &   71.9 \scriptsize{$\pm$ 6.8}		    &   79.4 \scriptsize{$\pm$ 0.7}	    &   75.9 \scriptsize{$\pm$ 0.7}		    &   75.2 \scriptsize{$\pm$ 0.8}	    &   74.7 \scriptsize{$\pm$ 0.7}             \\

\multicolumn{2}{l}{DANN}  & 70.1 \scriptsize{$\pm$ 1.8}	& 70.5 \scriptsize{$\pm$ 1.7}		& 79.6 \scriptsize{$\pm$ 0.4}	& 76.9 \scriptsize{$\pm$ 0.4}		& 81.6 \scriptsize{$\pm$ 0.4}	& 80.0 \scriptsize{$\pm$ 0.4}  \\

\multicolumn{2}{l}{CDAN}                   &  75.3 \scriptsize{$\pm$ 4.3} & 	75.2 \scriptsize{$\pm$ 4.6} & 		79.3 \scriptsize{$\pm$ 1.3} & 	76.4 \scriptsize{$\pm$ 0.9} & 		82.1 \scriptsize{$\pm$ 0.3} & 	80.0 \scriptsize{$\pm$ 0.2}    \\
\midrule

\multicolumn{2}{l}{UDAGCN}      & 66.4 \scriptsize{$\pm$ 5.1} & 	64.1 \scriptsize{$\pm$ 6.2} & 		79.3 \scriptsize{$\pm$ 0.5} & 	74.6 \scriptsize{$\pm$ 0.4} & 		78.3 \scriptsize{$\pm$ 2.6} & 	74.5 \scriptsize{$\pm$ 2.7}   \\
\multicolumn{2}{l}{EERM}            & 64.9 \scriptsize{$\pm$ 3.5} & 	60.0 \scriptsize{$\pm$ 3.2} & 		77.3 \scriptsize{$\pm$ 0.4} & 	74.5 \scriptsize{$\pm$ 0.3} & 		81.0 \scriptsize{$\pm$ 0.4} & 	78.1 \scriptsize{$\pm$ 0.4}            \\
\multicolumn{2}{l}{SRGNN-IW$^{\dag}$}                 & 69.2 \scriptsize{$\pm$ 1.6} & 	69.9 \scriptsize{$\pm$ 1.7} & 		79.5 \scriptsize{$\pm$ 1.1} & 	76.7 \scriptsize{$\pm$ 0.8} & 		81.4 \scriptsize{$\pm$ 0.4} & 	79.5 \scriptsize{$\pm$ 0.3}  \\

  \midrule
\multicolumn{2}{l}{\Ours($\alpha=0$)}             & 74.0 \scriptsize{$\pm$ 4.7} & 	73.3 \scriptsize{$\pm$ 4.9} & 		80.1 \scriptsize{$\pm$ 0.5} & 	77.2 \scriptsize{$\pm$ 0.4} & 		82.1 \scriptsize{$\pm$ 0.3} & 	80.0 \scriptsize{$\pm$ 0.3}        \\
\multicolumn{2}{l}{\Ours($\beta=0$)}                  & 71.6 \scriptsize{$\pm$ 2.3} & 	71.2 \scriptsize{$\pm$ 2.6} & 		80.2 \scriptsize{$\pm$ 0.4} & 	77.3 \scriptsize{$\pm$ 0.3} & 		82.3 \scriptsize{$\pm$ 0.4} & 	80.2 \scriptsize{$\pm$ 0.4}        \\
\multicolumn{2}{l}{\Ours}                 &  78.5 \scriptsize{$\pm$ 4.0} & 	78.1 \scriptsize{$\pm$ 4.3} & 		80.3 \scriptsize{$\pm$ 0.8} & 	77.3 \scriptsize{$\pm$ 0.6} & 		82.5 \scriptsize{$\pm$ 0.3} & 	80.4 \scriptsize{$\pm$ 0.3}  \\
\bottomrule
\end{tabular}}
\label{tab:small_network}
\end{table*}

\subsection{Hyperparameter and Complexity Study}
\xhdr{Choices of $\alpha$ and $\beta$.}
The main difference between \Ours and \Ours++ is the introduction of aligning marginal distribution $(\mu_\S^g,\mu_\T^g)$ together with conditional shift controlled by hyper-parameter $\alpha$ in Eq.(11) of the main paper.
In this section we study how varying $\alpha,\beta$ between $[0,1]$ in \Ours++ affects the performance. We conduct 10 runs for each $\alpha$ while fixing $\beta=0.1$ on four node classification datasets and vice versa. In Figure~\ref{fig:varying_alpha}, we observed that \Ours++ does not consistently outperform \Ours ($\alpha=0$) except dataset ACM-DBLP. Because different domains may have different word distributions as node features, and in this case we find that regularizing the representation shift appears to be helpful. 
In Figure~\ref{fig:varying_beta}, we observe that the performance on all four datasets improves when $\beta>0$, further validating that minimizing conditional shift is a key factor in our framework. Overall, our performance is not sensitive to the hyper parameters within a reasonable range.

\begin{figure}
\begin{subfigure}{0.45\linewidth}
\includegraphics[width=1\textwidth]{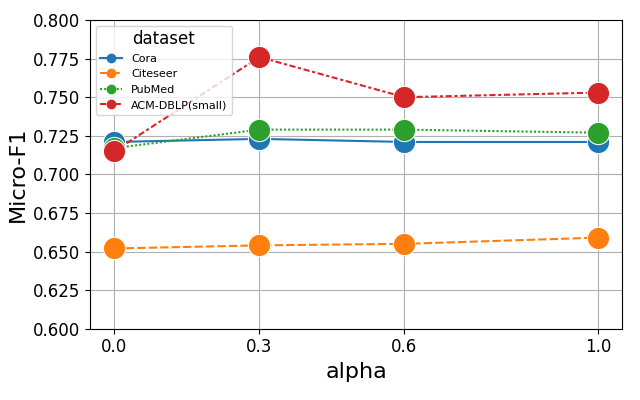}
\centering
\caption{Node classification accuracy varying $\alpha$}
\label{fig:varying_alpha}
\end{subfigure}
\hspace*{\fill}
\begin{subfigure}{0.45\linewidth}
\includegraphics[width=1\textwidth]{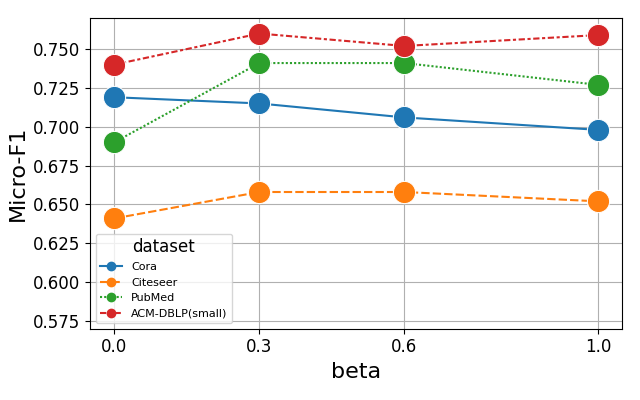}
\centering

\caption{Node classification accuracy varying $\beta$}
\label{fig:varying_beta}
\end{subfigure}
\end{figure}

\xhdr{Time and Space Complexity of \Ours.} We would like to provide further details on training time and extra costs on a non-citation graph from Open Graph Benchmark~\cite{hu2020open} - ogbn-proteins. In ogbn-proteins, nodes represent proteins, and edges indicate different types of biologically meaningful associations between proteins. The task is to predict the presence of protein functions in a multi-label binary classification setup, where there are 112 kinds of labels to predict in total. It is considered as reasonably large with 132 thousand nodes and 39 million edges. We report the actual running time and actual GPU usage per epoch varying batch size $N$ in Table~\ref{tab:time_space_complexity}. We observe that the training time of \Ours increases only slightly when the batch size is set to 128. The additional space complexity is negligible for all batch sizes. The additional time complexity, as explained earlier, is primarily influenced by the batch size. Choosing an appropriate batch size, such as 128 or 256, can reduce the computation cost of solving the optimal transportation plan in \Ours.

\begin{table}[]
    \centering
    \caption{Additional Time and Space Complexity of \Ours.}
    \label{tab:time_space_complexity}
    
    \begin{tabular}{c*{6}{c}}
    \toprule
    \multirow{2}{*}{Method} & \multicolumn{3}{c}{Time} & \multicolumn{3}{c}{Space} \\
    & 128 & 256 & 512 & 128 & 256 & 512 \\
    \midrule
    GraphSAGE & 6min04s & 6min20s & 6min51s & 5035MB & 5075MB & 5149MB \\
    \Ours & 6min46s & 8min42s & 14min08s & 5081MB & 5129MB & 5389MB \\
    \bottomrule
    \end{tabular}
    
\end{table}

\subsection{Additional Experiments on GraphOOD Benchmark}
We performed additional experiments on graph classification using six datasets obtained from the data curators of DrugOOD~\cite{DBLP:conf/nips/YangZWJY22}. 
The DrugOOD dataset is derived from the ChEMBL website, which houses a large-scale bioassay deposition~\cite {DBLP:journals/nar/MendezGBCVFMMMN19}. 
The dataset offers various indicators for splitting, such as assay, scaffold, and size. 
Furthermore, we applied three different splitting schemes to both IC50 and EC50 categories in DrugOOD. 
As a result, we obtained six datasets: EC50-$\star$ and IC50-$\star$, where the suffix $\star$ denotes the specific splitting scheme (IC50/EC50-assay/scaffold/size). 
This approach enables us to comprehensively evaluate the performance of our method under different environmental definitions.
All six datasets focus on ligand-based affinity prediction (LBAP), where each molecule is labeled as active or inactive. 
For all datasets, we followed the default training-validation-test split outlined following ~\cite{DBLP:conf/nips/YangZWJY22}. 
During training, we utilized all molecules in the training set to optimize the model parameters. 
Subsequently, we selected hyperparameters based on the validation set and reported the results on the test molecule set using the model that achieved the best performance on the validation set.

For graph classification, to build the base model, we adopt a 4-layer GIN~\cite{DBLP:conf/iclr/XuHLJ19} for node representations and a mean pooling layer for graph representations followed by a linear head to make prediction. 
The experimental results are presented in Table \ref{tab:graph_classification_drugood}. 
Upon the careful observations, we can find several noteworthy discoveries. 
Firstly, it becomes evident that the performance of different algorithms varies significantly across different settings, predominantly due to the presence of distinct distribution shifts. 
This implies that algorithm selection should be tailored to the specific characteristics of the dataset and the nature of the distribution shift. 
In addition, our proposed approach (\Ours) and its variants consistently outperform the other baseline methods. 
This persistent superiority can be attributed to the deliberate design of our approach, which prioritizes optimal performance in graph classification scenarios. 
The underlying techniques and mechanisms employed by our approach effectively leverage the inherent structure and relationships within graph nodes, leading to superior classification accuracy. 
Furthermore, the standout performance of our approach (\Ours++) should not be overlooked. 
Across all datasets, \Ours++ consistently achieved a top ranking, showcasing its robustness and effectiveness. 
This consistent high performance across various datasets signifies the potential of our approach to accurately predict graph properties and opens up promising avenues for its practical applications.

\renewcommand\arraystretch{1.0}
\begin{table}[t]

\center \footnotesize
\tabcolsep0.22 in
\caption{Experimental results for graph classification on Drugood datasets (including Accurary and AUC scores).}
\label{tab:graph_classification_drugood}
\begin{tabular}{lcccccc}
\toprule
\multicolumn{1}{l}{\multirow{3}[1]{*}{\textbf{Model}}} & \multicolumn{6}{c}{\textbf{lbap\_core\_ec50}} \\
\cmidrule(lr){2-7}
 & \multicolumn{2}{c}{\textbf{Assay}} & \multicolumn{2}{c}{\textbf{Scaffold}} & \multicolumn{2}{c}{\textbf{Size}} \\
 \cmidrule(lr){2-3}  \cmidrule(lr){4-5} \cmidrule(lr){6-7}
 & \textbf{ACC} & \textbf{AUC} & \textbf{ACC} & \textbf{AUC} & \textbf{ACC} & \textbf{AUC} \\
\midrule
Base Model & 87.89 & 69.46 & 70.32 & 59.66 & 67.86 & 61.53 \\
\midrule
CMD & 70.68 & 50.15 & 58.51 & 47.99 & 67.22 & 57.97 \\
DANN & 87.49 & 64.70 & 68.36 & 58.16 & 67.78 & 47.58 \\
CDAN & 87.54 & 69.25 & 70.88 & 60.23 & 68.14 & 61.10 \\
\midrule
UDAGCN & 82.79 & 73.05 & 72.38 & 61.23 & 69.70 & 61.06 \\
SRGNN-IW & 87.39 & 74.04 & 72.05 & 60.35 & 68.94 & 60.33 \\
\midrule
\Ours  & 88.62 & \textbf{74.40} & 71.94 & 60.45 & 69.42 & 59.94 \\
\Ours++   & \textbf{88.84} & 71.22 & \textbf{73.57} & \textbf{61.75} & \textbf{70.34} & \textbf{61.57} \\
\midrule
\multicolumn{1}{l}{\multirow{3}[1]{*}{\textbf{Model}}} & \multicolumn{6}{c}{\textbf{lbap\_core\_ic50}} \\
\cmidrule(lr){2-7}
 & \multicolumn{2}{c}{\textbf{Assay}} & \multicolumn{2}{c}{\textbf{Scaffold}} & \multicolumn{2}{c}{\textbf{Size}} \\
 \cmidrule(lr){2-3}  \cmidrule(lr){4-5} \cmidrule(lr){6-7}
 & \textbf{ACC} & \textbf{AUC} & \textbf{ACC} & \textbf{AUC} & \textbf{ACC} & \textbf{AUC} \\
\midrule
Base model & 81.21 & 68.34 & 74.04 & 63.69 & 72.80 & 61.51 \\
\midrule
CMD & 74.24 & 68.55 & 72.54 & 60.33 & 68.30 & 58.14 \\
DANN & 83.22 & 70.08 & 76.00 & 66.37 & 70.08 & 63.45 \\
CDAN & 83.06 & 71.29 & 76.52 & 66.42 & 72.87 & 64.79 \\
\midrule
UDAGCN & 81.34 & 69.89 & 74.66 & 63.77 & 72.96 & 64.79 \\
SRGNN-IW & 82.91 & 71.00 & 75.51 & 63.80 & 73.32 & 64.85 \\
\midrule
\Ours  & 83.47 & \textbf{72.40} & \textbf{77.77} & \textbf{67.50} & 73.42 & 62.50 \\
\Ours++  & \textbf{83.56} & 71.64 & 77.36 & 66.04 & \textbf{73.92} & \textbf{65.87} \\
\bottomrule
\end{tabular}
\end{table}
\end{document}